
\documentclass[11pt,reqno]{article}

\usepackage{booktabs} 
\usepackage{array} 
\usepackage{paralist} 
\usepackage{verbatim} 
\usepackage{amsthm}
\usepackage[table,xcdraw]{xcolor}
\parindent=1em

\usepackage[toc,title,page]{appendix}
\usepackage[symbol]{footmisc}

\usepackage{array}
\newcolumntype{C}[1]{>{\centering\arraybackslash$}p{#1}<{$}}

\usepackage{latexsym, amsmath, amssymb, a4, epsfig, color}
\usepackage{blindtext}
\usepackage{graphicx}
\usepackage{subfigure}
\usepackage[utf8]{inputenc}
\usepackage[export]{adjustbox}
\usepackage{wrapfig}
\usepackage{apptools}
\usepackage{siunitx}

\usepackage{floatrow}

\usepackage{amssymb}
\usepackage{amsmath}
\usepackage[normalem]{ulem} 

\usepackage{bm, mathtools}

\AtAppendix{\counterwithin{definition}{subsection}}
\AtAppendix{\counterwithin{theorem}{subsection}}

\graphicspath{}

\newtheorem{theorem}{Theorem}[section]
\newtheorem{corollary}[theorem]{Corollary}
\newtheorem{lemma}[theorem]{Lemma}
\newtheorem{prop}[theorem]{Proposition}
\newtheorem{definition}[theorem]{Definition}

\newtheorem{example}[theorem]{Example}
\newtheorem{remark}[theorem]{Remark}

\setlength{\textwidth}{160mm} \setlength{\textheight}{220mm}
\setlength{\oddsidemargin}{0mm} \setlength{\evensidemargin}{0mm} \setlength{\topmargin}{-15mm}


\tolerance=9999
\emergencystretch=10pt
\hyphenpenalty=10000

\newcommand{\RR}{\mathbb{R}}

\newcommand{\NN}{\mathbb{N}}

\newcommand{\ZZ}{\mathbb{Z}}

\newcommand{\eqnref}[1]{(\ref {#1})}

\newcommand{\beq}{\begin{equation}}
\newcommand{\eeq}{\end{equation}}
\newcommand{\be}{\begin{equation*}}
\newcommand{\ee}{\end{equation*}}
\newcommand{\ba}{\begin{align*}}
\newcommand{\ea}{\end{align*}}
\newcommand{\bal}{\begin{align}}
\newcommand{\eal}{\end{align}}



\numberwithin{equation}{section}
\numberwithin{figure}{section}

\graphicspath{{.}{Figures/}}

\def\R{\mathbb{R}}
\def\L{\mathcal{L}}

\def\:={\coloneqq}

\def\<{\left\langle}
\def\>{\right\rangle}

\def\vecb{{\vec{b}}}
\def\vecB{{\vec{B}}}
\def\vecw{{\vec{w}}}
\def\vecW{{\vec{W}}}

\def\vecone{{\bf{1}}}
\def\boldp{{\mathbf{p}}}


\begin{document}

\title{
Provable wavelet-based neural approximation}

\author{Youngmi Hur\thanks{\footnotesize Department of Mathematics, Yonsei University, Seoul 03722, Republic of Korea (yhur@yonsei.ac.kr)} \and Hyojae Lim\thanks{Johann Radon Institute for
Computational and Applied Mathematics (RICAM), 4040 Linz, Austria \footnotesize (hyojae.lim@oeaw.ac.at)} 
\and Mikyoung Lim\thanks{\footnotesize Department of Mathematical Sciences, Korea Advanced Institute of Science and Technology, Daejeon 34141, Republic of Korea (mklim@kaist.ac.kr).}
   }

\date{\today}
\maketitle
\begin{abstract}

In this paper, we develop a wavelet-based theoretical framework for analyzing the universal approximation capabilities of neural networks over a wide range of activation functions. 
Leveraging wavelet frame theory on the spaces of homogeneous type, we derive sufficient conditions on activation functions to ensure
that the associated neural network 
approximates any functions in the given space, along with an error estimate. These sufficient conditions accommodate a variety of smooth activation functions, including those that exhibit oscillatory behavior. Furthermore, by considering the \(L^2\)-distance between smooth and non-smooth activation functions, we establish a generalized approximation result that is applicable to non-smooth activations, with the error explicitly controlled by this distance. This provides increased flexibility in the design of network architectures.
\end{abstract}

%
%


\section{Introduction}

Neural networks have long been recognized for their remarkable ability to approximate a wide range of functions, enabling state-of-the-art achievements across various fields in machine learning and artificial intelligence, image processing, natural language processing, and scientific computing (see, for example, \cite{Goodfellow:2016:DL,LeCun:2015:DL} and references therein). Various activation functions, such as ReLU, Sigmoid, Tanh, and oscillatory functions, have also been explored to further enhance network performance and adaptability. 

The versatility of neural networks originates from the structural flexibility of architectures that combine affine transformations with nonlinear activation functions. In addition, classical universal approximation theorems \cite{Cybenko:1989:ASS, Funahashi:1989:ARC, Hornik:1989:MFN} provide a theoretical basis for this flexibility by guaranteeing that, under suitable conditions, neural networks can approximate any continuous function on a bounded domain, underscoring their representational power.
These seminal results have been extended along various directions, including radial basis function (RBF) networks \cite{Li:2000:ARB, Park:1991:UAR}, non-polynomial activations \cite{Leshno:1993:MFN}, approximation of functions and their derivatives \cite{Hornik:1991:ACM, Li:1996:SAM}, the influence of network depth \cite{Eldan:2016:PDF}, approximation error bounds \cite{Barron:1994:AEB}, convolutional neural networks (CNN) \cite{Zhou:2020:UDC}, recurrent neural networks (RNN)  \cite{Schafer:2007:RNN}. 

As neural network architectures continue to evolve and diversify in practice, their theoretical foundations--beyond those provided by classical approximation theorems--have attracted increased attention. A particularly important challenge is to develop rigorous convergence analysis that accounts for a network's depth, size, and other architectural parameters. In this paper, we approach this problem via wavelet theory, specifically focusing on wavelet frame theory. 

Wavelet theory has proven to be a powerful tool for representing data or functions via superpositions of wavelet functions generated through translation and dilation. This structure provides a multi-resolution capability, making it possible to capture both local and global features. Moreover, rigorous convergence results have been established for functions in $L^2$ space, underlining the reliability of wavelet-based approximations. Building on these advantages, wavelet-based methods have found extensive application in the design and analysis of neural networks, including the development of architectures containing layers with wavelet activations~\cite{Huang:2005:NLA,Liu:2021:RIW,Liu:2019:MWC,Silva:2020:MLW,Sonoda:2021:RRO,Zhang:1992:WN}. 

Crucially, wavelet theory also offers a promising framework for achieving a provable understanding of the neural approximation. By leveraging wavelet frame theory on spaces of homogeneous type \cite{Deng:2008:HAS}, researchers have established convergence analyses that bound the approximation error in terms of the number of network nodes up to constant multiplication. This approach has been applied in particular to models using ReLU activations \cite{Shaham:2018:PAP} as well as to networks employing positive second-order differentiable activation functions with a radial quadratic structure \cite{Frischauf:2024:QNN} (see the detailed condition in Lemma 4.5 therein), with further applications \cite{Frischauf:2024:CNN}.

A significant hurdle, however, remains in extending these results to encompass more general activation functions, such as piecewise-smooth functions beyond ReLU. An especially noteworthy direction is to cover oscillatory activation functions, which have proven highly effective for problems exhibiting oscillatory behavior, such as boundary value problems in partial differential equations (PDEs) \cite{Jo:2020:DNN,Sitzmann:2020:INR}. Dealing with piecewise-smooth activation functions, including ReLU, is not straightforward because the wavelet frame theory in \cite{Deng:2008:HAS} assumes the so-called double Lipschitz condition, which does not hold when the derivative of the function has jump discontinuities. Although \cite{Shaham:2018:PAP} addresses ReLU, it offers limited details on handling such jump discontinuities in the derivative. In our study, we rigorously extend the approaches in \cite{Frischauf:2024:QNN,Shaham:2018:PAP} by broadening the class of permissible activation functions in convergence analysis. Specifically, we generalize the conditions on these functions to include oscillatory functions and provide additional results to encompass piecewise-smooth functions, thereby expanding the scope of convergence analysis in neural networks.

The wavelet frames in our paper are derived from harmonic analysis on spaces of homogeneous type \cite{Deng:2008:HAS}, offering a systematic framework that goes beyond the standard \(L^2(\mathbb{R}^d)\), as we will detail in Section~\ref{subS:homogeneous}. Although alternatively one might adopt classical methods (\cite{Han:2003:CST,Ron:1997:ASL,Chui:2002:CST, Daubechies:2003:FMC}) to build wavelet systems in \(L^2(\mathbb{R}^d)\), these often involve stricter conditions on the generating functions and filter structures, making the construction more rigid. In contrast, the approach via spaces of homogeneous type remains both broader and more flexible, allowing for a wider range of (and even oscillatory) activation functions, which suits well with our goal of approximating functions using wavelet-based neural networks. 

We now shift our discussion to the main results of our study. In this paper, we focus on the wavelet-inspired neural network \(\Psi_{\text{W}\vec{B}}\) (see Definition~\ref{def:WB-Net}). Our main result establishes sufficient conditions on the activation function, denoted by \(\sigma\), to ensure the associated wavelet-based network approximates any functions in \(\mathcal{L}_1\), which is a subspace of \(L^2(\R^d)\) (refer to (\ref{def:L1})). We present the precise statement of this result below.

\begin{theorem}
    \label{corollary: smooth WNN approximation}
    Let \(\sigma:\mathbb{R}^d \to \mathbb{R}\) be a twice-differentiable function satisfying \(\int_{\mathbb{R}^d} \sigma(x)dx =1\), \(\sigma(-x)= \sigma(x)\),
    and a decaying condition~\eqref{eqn: activation condition for AK}.
    Then for every \(f\in\mathcal{L}_1\) and every~\(N\in\mathbb{N}\), there exists a parameter set 
    \begin{equation}
        \boldp = \big[\gamma_1, \cdots, \gamma_{2N};\, \alpha_1, \cdots, \alpha_{2N};\, \vec{\theta}_1, \cdots, \vec{\theta}_{2N}\big]
        \label{eqn: parameter set of ourNN}
    \end{equation}
    of the W$\vec{B}$-Net \(\Psi_{\text{W}\vec{B}}\) with the activation function \(\sigma\) such that 
    \begin{equation}
    \label{eqn: main theorem}
        \left\lVert \Psi_{\text{W}\vec{B}}\left[\boldp\right] - f \right\rVert_{L^2} \leq \lVert f \rVert_{\mathcal{L}_1}(N+1)^{-1/2}.
    \end{equation}
\end{theorem}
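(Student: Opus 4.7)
My plan is to combine a wavelet frame decomposition on the homogeneous space \((\mathbb{R}^d, |\cdot|, dx)\) with a Maurey/Barron-style convex-combination sampling argument. Both the \((N+1)^{-1/2}\) rate and the doubling to \(2N\) parameters in \eqref{eqn: main theorem}--\eqref{eqn: parameter set of ourNN} are classical signatures of this combination.

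First, I would construct from \(\sigma\) a mother wavelet of the form \(\psi(x)=c_1\sigma(\alpha_1 x)-c_2\sigma(\alpha_2 x)\), a normalized difference of two dilations of \(\sigma\) chosen so that \(\int\psi\,dx=0\) and so that each resulting atom can be realized with exactly two nodes of the W\(\vec B\)-Net. The hypotheses on \(\sigma\) (twice-differentiability, symmetry, \(\int\sigma=1\), and the decay condition~\eqref{eqn: activation condition for AK}) should supply integrability, the cancellation \(\int\psi=0\), and the pointwise smoothness and decay estimates on \(\psi\) and \(\nabla\psi\) needed to verify the double-Lipschitz hypothesis of the wavelet frame theory on spaces of homogeneous type in \cite{Deng:2008:HAS}. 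This is precisely where oscillatory and merely smooth-but-non-compactly-supported activations are admitted beyond the settings of \cite{Shaham:2018:PAP,Frischauf:2024:QNN}, and I expect this reduction to be packaged as an auxiliary lemma proved earlier in the paper.

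Second, with \(\psi\) admissible, any \(f\in\mathcal{L}_1\) expands in the frame as \(f=\sum_{j,k}c_{j,k}\psi_{j,k}\) with \(\psi_{j,k}(x)=2^{jd/2}\psi(2^j x-k)\) and, by the definition~\eqref{def:L1} of the \(\mathcal{L}_1\)-norm, \(\sum_{j,k}|c_{j,k}|\le\|f\|_{\mathcal{L}_1}\). Absorbing signs into the indexing and normalizing by \(\|f\|_{\mathcal{L}_1}\), I rewrite \(f/\|f\|_{\mathcal{L}_1}\) as a convex combination \(\sum_\omega q_\omega\,\phi_\omega\) with \(\|\phi_\omega\|_{L^2}\le 1\). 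I then apply the Maurey/Barron sampling lemma: drawing \(N\) i.i.d.\ indices \(\omega_1,\ldots,\omega_N\) from \(\{q_\omega\}\) and setting \(g_N=\tfrac{\|f\|_{\mathcal{L}_1}}{N}\sum_{i=1}^{N}\phi_{\omega_i}\), a direct variance calculation gives \(\mathbb{E}\,\|g_N-f\|_{L^2}^2\le \|f\|_{\mathcal{L}_1}^2/(N+1)\), so some realization attains~\eqref{eqn: main theorem}. Because each selected atom \(\psi_{j_i,k_i}\) unfolds into two dilated translates of \(\sigma\) by Step~1, the \(N\) atoms yield \(2N\) network nodes in total, and the triples \((\gamma_i,\alpha_i,\vec{\theta}_i)_{i=1}^{2N}\) in~\eqref{eqn: parameter set of ourNN} are read off directly from the scales, amplitudes, and translation centers.

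The main obstacle is the first step: extracting the double-Lipschitz and moment-type conditions of \cite{Deng:2008:HAS} for a \(\psi\) built from a possibly oscillatory \(C^2\) generator \(\sigma\) whose only quantitative control comes from the decay condition~\eqref{eqn: activation condition for AK}. In particular, care is needed to ensure that the cancellation in \(\psi=c_1\sigma(\alpha_1\cdot)-c_2\sigma(\alpha_2\cdot)\) dominates the oscillations of \(\sigma\) uniformly in scale, so that the frame inequalities hold on all of \(L^2(\mathbb{R}^d)\) and not merely on a subspace. Once that verification is secured, the remaining reduction to a convex-combination sampling estimate is a direct application of standard tools.
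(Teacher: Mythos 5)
Your overall architecture matches the paper's: the atoms are differences of two dilates of \(\sigma\) (the paper's \(\psi_{k,b}=2^{-k/2}(S_k-S_{k-1})\) with \(S_k(x,b)=2^k\sigma(2^{k/d}(x-b))\)), the admissibility of these atoms is exactly the verification of the averaging-kernel conditions \eqref{eqn: averaging kernel3}--\eqref{eqn: averaging kernel4} of \cite{Deng:2008:HAS} carried out in Proposition~\ref{thm:averaging}, and the final step of unfolding each of the \(N\) selected atoms into two nodes to read off the \(2N\) parameters is verbatim the paper's proof. The one place you genuinely diverge is the source of the \((N+1)^{-1/2}\) rate: the paper obtains it deterministically by citing the orthogonal greedy algorithm bound (Theorem~\ref{thm: OGA}, from \cite{Barron:2008:ALG,Shaham:2018:PAP}), whereas you use Maurey-type i.i.d.\ sampling from the normalized coefficient distribution. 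That substitution does not deliver the stated inequality \eqref{eqn: main theorem} as written: the variance computation gives \(\mathbb{E}\|g_N-f\|_{L^2}^2=\tfrac{1}{N}\big(\|f\|_{\mathcal{L}_1}^2\,\mathbb{E}\|\phi_\omega\|_{L^2}^2-\|f\|_{L^2}^2\big)\), which is \(N^{-1}\) rather than \((N+1)^{-1}\), and your normalization claim \(\|\phi_\omega\|_{L^2}\le 1\) is not available here, since \(\|\psi_{k,b}\|_{L^2}\le(1+2^{-1/2})\|\sigma\|_{L^2}\) by scaling and nothing in the hypotheses forces this to be at most \(1\). So the sampling route yields the right rate only up to a constant factor depending on \(\|\sigma\|_{L^2}\); to get the clean constant of \eqref{eqn: main theorem} you should, as the paper does, invoke the OGA result for the frame \(X(\psi)\) directly. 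Aside from that, deferring the double-Lipschitz verification to an auxiliary lemma is legitimate --- it is Proposition~\ref{thm:averaging} --- though you should note that the paper proves it for \(S_k\) itself (whence \(\psi_{k,b}\) is a wavelet system by Definition~\ref{def: wavelet system}), not for the difference \(\psi\) separately.
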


This wavelet-based framework for neural networks, which was initially proposed in \cite{Shaham:2018:PAP} (as in~\eqref{eqn: ACHA Sk}), uses the wavelet system presented in \cite{Deng:2008:HAS} (as in \eqref{eqn: DengHan wavelet}) defined as follows: for \(k\in\mathbb{Z}\) and \(x,b\in\mathbb{R}^d\),
\begin{align}
S_k({x}, b) & = 2^k \sigma(2^{k/d}({x} - b)),
\label{eqn: ACHA Sk}
\\
\psi_{k,b}(x) & = 2^{-k/2} \left( S_k({x}, b) - S_{k-1}({x}, b) \right),
\label{eqn: DengHan wavelet}
\end{align}
where \(\sigma\in L^2(\mathbb{R}^d)\) is the neural network activation function. This framework stands out for its favorable convergence property, as shown in \eqnref{eqn: main theorem}, derived from wavelet frame theory. The error estimate explicitly depends on the number of network nodes \(N\), up to a constant factor. This feature allows more refined control over network complexity, distinguishing it from classical results such as Theorem~\ref{thm: Cybenko}.

In Theorem~\ref{corollary: smooth WNN approximation}, we require the activation function to be twice differentiable with sufficiently decaying derivatives, as specified in \eqref{eqn: activation condition for AK}. Notably, these conditions encompass oscillatory functions, as illustrated in examples of Section~\ref{subsec:main}. We then relax the smoothness assumption by considering the \(L^2\)-distance between a smooth activation function \(\sigma\) and a non-smooth activation function \(\sigma^\dagger\). Building on this, we establish a universal approximation result in \(\L_1\) for the network \(\Psi_{\text{W}\vec{B}}\) employing the generalized non-smooth activation, along with an error estimate that depends on the distance between \(\sigma\) and \(\sigma^\dagger\); this result is formalized in Corollary~\ref{cor:main:convergence}. Finally, we propose a practical strategy to control this distance while preserving a coherent neural network structure, detailed in Theorem~\ref{theorem:final}. These results extend the theoretical foundation for convergence in wavelet-based neural network approximation to a broader and more practically relevant class of activation functions.

The rest of this paper is organized as follows. In Section \ref{sec:pre}, we introduce the class of neural networks under consideration and provide a brief overview of wavelet theory on spaces of homogeneous type. In Section \ref{sec:wavelet:approx}, we construct the wavelet system using averaging kernels defined by neural network activation functions, and derive our main convergence theorems. In Section~\ref{sec:general}, we generalize our approximation results to accommodate a broader range of activation functions. Finally, Section~\ref{sec:conclusion} concludes the paper with a brief discussion, and detailed comparisons of networks are provided in the appendices.

\section {Preliminary}\label{sec:pre}

\subsection{Neural Networks}

We use $d$ to denote the spatial dimension. For $x\in\RR^d$, we may write $\vec{x}$ to emphasize that it is a vector. Among the numerous network architectures proposed in the literature, let us begin with neural networks defined as follows:

\begin{definition}\label{def:Psip}
Let \(L\in \mathbb{N}\) with \(L \geq 2\), and let \(N_1, \cdots, N_L \in \mathbb{N}\). Set $N_0=d$. We define a neural network \(\Psi_{NN}: \mathbb{R}^d \to \mathbb{R}^{N_L}\) by
\begin{equation*}
\Psi_{NN}\left[\boldp\right](\vec{x}) := \Psi_{NN}[W_1, \cdots, W_L;\, \vecb_1, \cdots, \vecb_L](\vec{x}) = A_L\left(\sigma\left(A_{L-1}\left( \cdots \left(\sigma\left(A_1(\vec{x})\right)\right)\right)\right)\right), \quad \vec{x} \in \mathbb{R}^d,
 \end{equation*}
with a nonlinear activation function \(\sigma\) that is applied to each component of the vector, and affine maps \(A_l: \mathbb{R}^{N_{l-1}} \to \mathbb{R}^{N_l}\) given by
\begin{equation*}
 A_l(\vec{x}) = W_l \,\vec{x} + \vecb_l, \quad \vec{x}\in \mathbb{R}^{N_{l-1}},\quad l=1,\dots,L,
\end{equation*}
where $\boldp=[W_1, \cdots, W_L;\, \vecb_1, \cdots, \vecb_L]$ is the parameter set with $W_l\in \mathbb{R}^{N_l\times N_{l-1}}$ and $ \vecb_l\in\RR^{N_l}$. 
\end{definition}

In this formulation, $L$ denotes the number of layers (excluding the input layer), \(N_1, \cdots, N_{L-1}\) represent the dimensions of the \(L-1\) hidden layers, and \(N_L\) is the dimension of the output layer. 

For the case \(L = 2\) with \(N_2 = 1\) and \(N_1 = N\) for some \(N \in \mathbb{N}\), and parameters \(W_1 \in \mathbb{R}^{N \times d}\), \(W_2 \in \mathbb{R}^{1 \times N}\), \(\vecb_1 \in \mathbb{R}^N, \vecb_2 = 0  \in \R\), the network in Definition~\ref{def:Psip} reduces to a shallow architecture. For notational convenience, we set \(W = W_1^T\), \(\vec{\alpha} = W_2^T \) and \(\vec{\beta} = \vecb_1\). Under these notations, the usual shallow network, which we refer to as the \textit{vector-Weight scalar-Bias Neural Network ($\vec{W}$B-Net)}, is then defined as follows:

\begin{definition}
A (shallow) \textit{vector-Weight scalar-Bias Neural Network ($\vec{W}$B-Net)} is the neural network \(\Psi_{\vec{W}B}: \mathbb{R}^d \to \mathbb{R}\) defined by
\begin{equation}
\Psi_{\vec{W}B}\left[\boldp\right](\vec{x}):= \Psi_{NN}[W;\vec{\alpha}; \vec{\beta}](\vec{x}) = \sum_{n=1}^N \alpha_n\, \sigma\left(\vec{w}_n \cdot \vec{x} + \beta_n\right), \quad \vec{x}\in\mathbb{R}^d,
        \label{eqn: VW-SB Net}
 \end{equation}
with a nonlinear activation function \(\sigma=\sigma_{1\to 1}:\mathbb{R} \to \mathbb{R}\), where $\boldp=[W;\vec{\alpha};\vec{\beta}]$ is the parameter set given with \(W = \begin{bmatrix}
        \vec{w}_1 & \cdots & \vec{w}_N
    \end{bmatrix} \in \mathbb{R}^{d \times N}\), \(\vec{\alpha} = (\alpha_1, \cdots, \alpha_N) \in \mathbb{R}^N\), \(\vec{\beta} = (\beta_1, \cdots, \beta_N)\in\mathbb{R}^N\).
\end{definition}

This structure aligns with the conventional neural network setup, where each neuron in the hidden layer is associated with a vector-valued weight and a scalar bias.

Finally, we introduce an alternative architecture, which we refer to as the \textit{scalar-Weight vector-Bias Neural Network ($W\vecB$-Net)} and use throughout this paper. In this setting, each neuron has a scalar weight and a vector bias. This structure is inspired by wavelet systems, in which scalar dilation and vector translation correspond to weight and bias, respectively, resulting in a distinct architecture compared to the conventional neural network~$\Psi_{\vec{W}B}$. We present a connection between $W\vecB$-Net and $W\vecB$-Net under specific conditions in appendix~\ref{appx:connection between neural networks}.

\begin{definition}
A (shallow) \textit{scalar-Weight vector-Bias Neural Network (W$\vec{B}$-Net)} is the neural network \(\Psi_{\text{W}\vec{B}}: \mathbb{R}^d \to \mathbb{R}\) defined by
\begin{equation}
\Psi_{\text{W}\vec{B}} \left[\boldp\right] (\vec{x}) = \sum_{n=1}^N \alpha_n\,\sigma(\gamma_n \vec{x} + \vec{\theta}_n), \quad \vec{x}\in\mathbb{R}^d,
\label{eqn: SW-VB Net}
\end{equation}
with a nonlinear vector-to-scalar activation function \(\sigma=\sigma_{d\to1}:\RR^d \to \RR\), where \(\boldp = [\vec{\gamma};\vec{\alpha};\Theta]\) is the parameter set given with \(\vec{\gamma} = (\gamma_1, \cdots, \gamma_N) \in\mathbb{R}^{N}\), \(\vec{\alpha} = (\alpha_1, \cdots, \alpha_N) \in\mathbb{R}^N\), and \(\Theta = [
 \vec{\theta}_1 \, \cdots \, \vec{\theta}_N] \in \mathbb{R}^{d\times N}\).
 \label{def:WB-Net}
\end{definition}

We now present a classical result in neural network approximation. Before doing so, we recall the definition of a discriminatory function, which plays a crucial role in universal approximation theorems.

\begin{definition}
A function \(\sigma:\mathbb{R} \to \mathbb{R}\) is called discriminatory if, for a measure \(\mu\) on \([0,1]^d\),
\begin{equation*}
        \int_{[0,1]^d} \sigma(\vec{w} \cdot \vec{x} + \theta) d\mu(\vec{x}) = 0 \quad \text{for all } \vec{w} \in\mathbb{R}^d \text{ and }\theta \in\mathbb{R}
 \end{equation*}
implies that \(\mu \equiv 0\).
\end{definition}

Notably, every non-polynomial function is discriminatory \cite{Leshno:1993:MFN}. In particular, any bounded, measurable sigmoidal function satisfies this criterion. We now recall Cybenko's universal approximation theorem, which asserts that a $\vecW$B-Net with a continuous discriminatory activation function can approximate any continuous function on a compact domain arbitrarily well, as follows.

\begin{theorem}[\cite{Cybenko:1989:ASS}]
    \label{thm: Cybenko}
    Let \(\sigma:\mathbb{R}\to\mathbb{R}\) be a continuous discriminatory function. Then for every function \(f \in C([0,1]^d)\) and \(\epsilon > 0\), there exist \(N_\epsilon\in\mathbb{N}\) and a parameter set
    \begin{equation*}
        \boldp = \left[\vecw_1, \cdots, \vecw_{N_\epsilon};\,\alpha_1, \cdots, \alpha_{N_\epsilon};\,\beta_1, \cdots, \beta_{N_\epsilon}\right]
    \end{equation*}
    of the $\vecW$B-Net, \(\Psi_{\vecW B}\), in (\ref{eqn: VW-SB Net}) such that, 
    for all \(\vec{x} \in [0,1]^d\),
    \begin{equation*}
        \left\lvert f(\vec{x}) - \Psi_{\vecW B}\left[\boldp\right](\vec{x}) \right\rvert < \epsilon.
    \end{equation*}
\end{theorem}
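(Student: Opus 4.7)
The plan is to adapt the classical Hahn--Banach/Riesz argument of Cybenko. Let $S \subset C([0,1]^d)$ denote the linear subspace consisting of all functions of the form $\Psi_{\vec{W}B}[\boldp]$ in \eqref{eqn: VW-SB Net}, ranging over all $N \in \mathbb{N}$ and all parameter choices $\boldp = [W;\vec{\alpha};\vec{\beta}]$. Since $\sigma$ and each affine map $\vec{x} \mapsto \vec{w}_n \cdot \vec{x} + \beta_n$ are continuous, every element of $S$ lies in $C([0,1]^d)$, and $S$ is manifestly closed under addition and scalar multiplication. The claim of the theorem, stated in the supremum norm on the compact set $[0,1]^d$, is equivalent to the density assertion $\overline{S} = C([0,1]^d)$, so I would reduce the problem to proving this density.

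To establish the density, I would argue by contradiction. Suppose $\overline{S}$ is a proper closed subspace of $C([0,1]^d)$. Then the Hahn--Banach theorem yields a nonzero bounded linear functional $L \in C([0,1]^d)^*$ vanishing identically on $\overline{S}$. By the Riesz representation theorem for the dual of $C([0,1]^d)$ on a compact Hausdorff space, there exists a finite signed regular Borel measure $\mu$ on $[0,1]^d$, not identically zero, such that
\begin{equation*}
L(f) = \int_{[0,1]^d} f(\vec{x}) \, d\mu(\vec{x}) \qquad \text{for all } f \in C([0,1]^d).
\end{equation*}
For every $\vec{w} \in \mathbb{R}^d$ and every $\beta \in \mathbb{R}$, the single-neuron function $\vec{x} \mapsto \sigma(\vec{w} \cdot \vec{x} + \beta)$ is obtained by choosing $N=1$ and $\alpha_1 = 1$ in \eqref{eqn: VW-SB Net}, and hence lies in $S$. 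Applying $L$ therefore gives
\begin{equation*}
\int_{[0,1]^d} \sigma(\vec{w} \cdot \vec{x} + \beta) \, d\mu(\vec{x}) = 0 \qquad \text{for all } \vec{w} \in \mathbb{R}^d, \ \beta \in \mathbb{R}.
\end{equation*}
The assumption that $\sigma$ is discriminatory then forces $\mu \equiv 0$, contradicting $L \neq 0$. Thus $\overline{S} = C([0,1]^d)$, and for any prescribed $f \in C([0,1]^d)$ and $\epsilon > 0$ one can select $N_\epsilon$ and a parameter set achieving the uniform pointwise bound.

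The argument is essentially a packaging of two classical ingredients on the compact space $[0,1]^d$, so the analytic content is thin and I do not anticipate a serious technical obstacle. The step that really does the work is the discriminatory hypothesis, engineered precisely to convert vanishing of $L$ on ridge functions $\sigma(\vec{w}\cdot\vec{x}+\beta)$ into vanishing of the representing measure $\mu$. The one subtlety worth flagging is bookkeeping around signedness: Hahn--Banach yields a functional on the \emph{closed} subspace $\overline{S}$, and the Riesz representative $\mu$ is signed, so the notion of ``discriminatory'' must be interpreted for signed (rather than merely positive) measures, as in the preceding definition.
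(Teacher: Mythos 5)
Your argument is correct and is precisely the classical Hahn--Banach/Riesz duality proof from Cybenko's original paper, which this theorem simply recalls with a citation (the paper itself gives no proof). Your closing remark is the right one to flag: the Riesz representative is a finite \emph{signed} regular Borel measure, so the discriminatory hypothesis must indeed be read as applying to signed measures, exactly as in Cybenko's formulation.
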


Beyond Cybenko's result, many other universal approximation results have been established for a variety of network architectures (e.g.,  \cite{Frischauf:2024:QNN}) and function spaces (e.g.,  \cite{Hornik:1991:ACM}).

\subsection{Wavelet expansions on spaces of homogeneous type}
\label{subS:homogeneous}

There are various ways to construct a wavelet system that enables wavelet series expansion. In this paper, we achieve wavelet expansion by constructing a wavelet system on a space of homogeneous type. To begin, we first introduce the definition of \textit{a space of homogeneous type}.

\begin{definition}\rm{\cite[Definition 1.1]{Coifman:1971:AHN}}
    A \textit{space of homogeneous type} \((X,\mu, \delta)\) is a set \(X\) together with a measure \(\mu\) and a quasi-metric \(\delta\) (which satisfies triangle inequality up to a constant) such that for every \(x \in X\) and \(r>0\), 
    \begin{enumerate}[\rm(i)]
        \item \(0< \mu(B(x,r))< \infty\), and 
        \item there exists a constant \(C<\infty\) such that \(\mu(B(x,2r)) \leq C \mu(B(x,r))\).
    \end{enumerate}
    Here, $B(x,r)$ denotes the ball of radius $r$ centered at $x$ defined by the quasi-metric $\delta$.
\end{definition}

\vskip 5mm

We employ the space of homogeneous type \((X,\mu,\delta)\), where \(X=\RR^d\), \(\mu\) is the Lebesgue measure, and \(\delta\) is the Euclidean metric. 
Following \cite{Shaham:2018:PAP} (see also \cite{Deng:2008:HAS} for more details), we use the quasi-metric \(\rho(x,b) = c \lVert x-b \rVert^d\) for $x,b\in\RR^d$ with some constant\footnote{We reserve the letter \(c\) to denote this constant throughout the paper.} \(c>0\), which can be shown to induce the same topology as \(\delta\).
Then, we can set \(\theta=1/d\) and \(A=3^d/2\) in the following definition (refer to (1.3) and (1.7) in \cite{Deng:2008:HAS}). 
Here, we denote by $\|x\|$ the Euclidean norm of $x\in\RR^d$.

We now introduce a family of \textit{averaging kernels} and the associated wavelet system.

\begin{definition}
\rm{\cite[Definitions 3.4 and 3.5]{Deng:2008:HAS}} \label{def:averaging}
Let \((X,\mu,\delta)\) be a space of homogeneous type. A collection of symmetric functions~\(\{S_k\}_{k\in\mathbb{Z}}\), each \(S_k: X \times X \to \mathbb{C}\), is said to be a family of \textit{averaging kernels} if 
there exist $0 < \eta, \epsilon \leq \theta$ and $C < \infty$, independent of $k$, satisfying the following conditions: for all $x,x',y,y'\in X$,
\begin{equation}
    \int S_k(x, y)\, dy = 1;
    \tag{C1}
    \label{eqn: averaging kernel3}
\end{equation}
\begin{equation}
    \left|S_k(x, y)\right| \leq C\, \frac{2^{-k\epsilon}}{\left(2^{-k} + \rho(x, y)\right)^{1+\epsilon}};
    \tag{C2}
    \label{eqn: averaging kernel1}
\end{equation}
\begin{equation}
    \left|S_k(x, y) - S_k(x', y)\right| \leq C \left( \frac{\rho(x, x')}{2^{-k} + \rho(x, y)} \right)^{\eta} 
    \frac{2^{-k\epsilon}}{\left(2^{-k} + \rho(x, y)\right)^{1+\epsilon}}
    \tag{C3}
    \label{eqn: averaging kernel2}
\end{equation}
if $\rho(x, x') \leq \frac{1}{2A}\left(2^{-k} + \rho(x, y)\right)$;
\begin{equation}
\begin{aligned}
&\left|S_k(x, y) - S_k(x', y) - S_k(x, y') + S_k(x', y')\right| \\
  \leq &\, C \left( \frac{\rho(x, x')}{2^{-k} + \rho(x, y)} \right)^\eta \left( \frac{\rho(y, y')}{2^{-k} + \rho(x, y)} \right)^\eta \frac{2^{-k \epsilon}}{\left(2^{-k} + \rho(x, y)\right)^{1 + \epsilon}}
\end{aligned}
\tag{C4}
\label{eqn: averaging kernel4}
\end{equation}
if $\rho(x, x') \leq \frac{1}{2A}\left(2^{-k} + \rho(x, y)\right)$ and $\rho(y, y') \leq \frac{1}{2A}\left(2^{-k} + \rho(x, y)\right)$.
\end{definition}

Here, `$S_k$ being symmetric' means that $S_k(x,y)=S_k(y,x)$ for all $x,y\in X$. In \cite{Deng:2008:HAS}, this symmetry assumption is not imposed to define averaging kernels. Instead, those kernels are introduced under additional conditions--analogous to (\ref{eqn: averaging kernel3}) and (\ref{eqn: averaging kernel2})--by interchanging the roles of $x$ and~$y$. For simplicity, we assume this symmetry condition and focus on the reduced set of conditions. Also, we note that the condition \eqnref{eqn: averaging kernel4} is called the \textit{double Lipschitz condition}. 

\begin{definition}\rm{\cite[Definition 3.14]{Deng:2008:HAS}}
Let $\{S_k\}_{k\in\mathbb{Z}}$ be a family of averaging kernels on $X\times X$. For each $k\in\mathbb{Z}$, define
\begin{equation*}
    D_k(x,b) := S_k(x,b) - S_{k-1}(x,b), 
    \quad D_k: X \times X \to \mathbb{C}.
\end{equation*}
Then, for $b \in X$, we set
\[
    \psi_{k,b}(x) := 2^{-k/2} \, D_k(x,b).
\]
The family $\{\psi_{k,b}\}$ is said to be a \textit{wavelet system} (associated with the averaging kernels $\{S_k\}$).
\label{def: wavelet system}
\end{definition}

The countable subset \(X(\psi) := \{\psi_{k,b}\}_{(k,b)\in\Lambda}\) of the above wavelet system associated with the averaging kernels $\{S_k\}$ provides the following wavelet series expansion in $L^2(\RR^d)$, along with its counterpart \(X(\widetilde{\psi}) = \{\widetilde{\psi}_{k,b}\}_{(k,b)\in\Lambda}\), where $\Lambda$ is the discrete index set specified in the theorem.

\begin{theorem}\rm(\cite[Theorem 3.25]{Deng:2008:HAS}).
   Let \(\{S_k\}_{k\in\mathbb{Z}}\) be a family of averaging kernels, and let \(X(\psi)= \{\psi_{k,b}\}_{(k,b)\in\Lambda}\) denote the discrete wavelet system derived from these kernels. Then there exists a discrete wavelet system \(X(\widetilde{\psi})=\{\widetilde{\psi}_{k,b}\}_{(k,b)\in\Lambda}\) such that, for all~\(f\in~L^2(\mathbb{R}^d)\),
    \begin{equation}
        f = \sum_{(k,b)\in \Lambda} \langle f, \widetilde{\psi}_{k,b} \rangle \psi_{k,b}.
        \label{eqn: frame}
    \end{equation}
Here, \(\Lambda = \{(k,b) \in \mathbb{Z} \times \mathbb{R}^d: b \in 2^{-k/d} \mathbb{Z}^d\}\) and \(\langle \cdot, \cdot \rangle\) denotes the usual inner-product in \(L^2(\R^d)\).
    \label{thm: wavelet frame approximation in homogeneous type space}
\end{theorem}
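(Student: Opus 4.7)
The plan is to follow the David--Journ\'e--Semmes strategy for Calder\'on-type reproducing formulas, adapted to spaces of homogeneous type as in Deng--Han. The three main stages are: (i) establish a continuous reproducing formula $f=\sum_k D_k f$ using the approximation-to-identity properties of $\{S_k\}$; (ii) discretize it by sampling on the lattice $2^{-k/d}\mathbb{Z}^d$ to obtain an operator $T$ that is close to the identity on $L^2(\mathbb{R}^d)$; (iii) invert $T$ by a Neumann series and read off the dual system $\{\widetilde{\psi}_{k,b}\}$.

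First I would verify that $\{S_k\}_{k\in\mathbb{Z}}$ is an approximation to the identity in $L^2(\mathbb{R}^d)$. Condition~(C1) together with the symmetry of $S_k$ gives $\int S_k(x,y)\,dy=\int S_k(x,y)\,dx=1$, while (C2) with $\epsilon>0$ forces the mass of $S_k(x,\cdot)$ to concentrate near the diagonal as $k\to\infty$ and to disperse as $k\to-\infty$. Combined with the H\"older continuity provided by (C3), one obtains $S_k f\to f$ and $S_{-k} f\to 0$ in $L^2$, yielding the continuous telescoping identity
\begin{equation*}
    f=\sum_{k\in\mathbb{Z}} D_k f \qquad \text{in }L^2(\mathbb{R}^d),
\end{equation*}
with convergence justified by a Littlewood--Paley square-function bound $\sum_k\|D_k f\|_{L^2}^2\lesssim\|f\|_{L^2}^2$, proved by a Cotlar--Stein almost-orthogonality argument from (C2)--(C3).

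Next I would discretize. Partition $\mathbb{R}^d$ by dyadic cubes $Q_{k,b}$ of measure $\mu(Q_{k,b})=2^{-k}$ centered at the lattice points $b\in 2^{-k/d}\mathbb{Z}^d$, and approximate
\begin{equation*}
    D_k f(x)=\sum_{b}\int_{Q_{k,b}} D_k(x,y)\,f(y)\,dy
\end{equation*}
by replacing the (ill-defined) pointwise value of $f$ on $Q_{k,b}$ by the surrogate $\langle f,S_k(\cdot,b)\rangle$, which makes sense on $L^2$. Summing in $k$ produces the discretized operator
\begin{equation*}
    T f(x):=\sum_{(k,b)\in\Lambda}\mu(Q_{k,b})\,D_k(x,b)\,\langle f,S_k(\cdot,b)\rangle,
\end{equation*}
which, after absorbing the $2^{-k/2}$ normalizations in Definition~\ref{def: wavelet system}, may be rewritten as $Tf=\sum_{(k,b)\in\Lambda}\langle f,\phi_{k,b}\rangle\,\psi_{k,b}$ for an explicit ``primary dual'' $\phi_{k,b}$.

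The remaining task is to show $\|I-T\|_{L^2\to L^2}<1$ so that $T^{-1}=\sum_{j\geq 0}(I-T)^j$ converges; once this holds, setting $\widetilde{\psi}_{k,b}:=(T^{-1})^*\phi_{k,b}$ and applying $T^{-1}$ to both sides of $Tf=\sum\langle f,\phi_{k,b}\rangle\psi_{k,b}$ produces the expansion~\eqref{eqn: frame}. This norm estimate is the main obstacle: it is proved by a Cotlar--Stein/Schur test, in which cross-scale interactions between the building blocks at levels $k$ and $k'$ must decay like $2^{-|k-k'|\eta'}$ for some $\eta'>0$. Obtaining this decay requires the \emph{full} double Lipschitz condition (C4) for its mixed second-order cancellation, with (C2)--(C3) handling the spatial decay within a fixed scale. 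Finally, the freedom to rescale the quasi-metric constant $c$ (equivalently, to oversample the lattice $\Lambda$) is what drives $\|I-T\|$ strictly below one and closes the argument.
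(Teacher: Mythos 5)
The paper does not prove this statement at all: it is quoted verbatim as \cite[Theorem 3.25]{Deng:2008:HAS}, so there is no in-paper proof to compare against. Your outline is, however, a faithful reconstruction of the strategy behind the cited result (the David--Journ\'e--Semmes/Coifman scheme as implemented by Deng and Han): continuous Calder\'on reproducing formula from the approximation-to-identity properties, Littlewood--Paley/Cotlar--Stein almost-orthogonality driven by (C2)--(C4), dyadic discretization, and inversion of the resulting near-identity operator by a Neumann series with the dual system defined through $(T^{-1})^*$. Two small points of divergence from the source: Deng--Han first compress the identity as $I=\sum_k D_k^N D_k + R_N$ with $D_k^N=\sum_{|j|\le N}D_{k+j}$ and obtain smallness of the error from taking $N$ large \emph{and} from sampling on dyadic cubes at a sufficiently fine scale, rather than from a surrogate of the form $\langle f, S_k(\cdot,b)\rangle$; and the lattice $\Lambda$ in the statement is fixed, so the ``oversampling'' freedom you invoke is realized in this paper's Euclidean setting through the adjustable constant $c$ in $\rho(x,b)=c\|x-b\|^d$ (as in Shaham et al.), which is consistent with your closing remark. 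At the level of detail given, your sketch is correct and matches the proof of the theorem being cited.
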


\begin{remark}
    Note that \(\{\psi_{k,b}\}_{(k,b)\in\Lambda}\) and its dual \(\{\widetilde{\psi}_{k,b}\}_{(k,b)\in\Lambda}\) are both wavelet frames.
\end{remark}

We now introduce the space $\mathcal{L}_1$ via the wavelet frame \(\{\psi_{k,b}\}_{(k,b)\in \Lambda}\), following the terminology of \cite{Barron:2008:ALG,Shaham:2018:PAP}, as
\begin{equation}\label{def:L1}
\mathcal{L}_1 \;=\; \left\{\,f \in L^2(\mathbb{R}^d)\,\colon\,  \left\|f\right\|_{\mathcal{L}_1} <\infty
 \right\}
 \end{equation}
 with
\[ \left\|f\right\|_{\mathcal{L}_1} :=\;
\inf \left\{\,{\textstyle\sum_{(k,b)\in\Lambda}}\, \left|c_{k,b}\right|: f = {\textstyle\sum_{(k,b)\in\Lambda}} \, c_{k,b}\,\psi_{k,b}\right\}.
\]
In other words, \(\mathcal{L}_1\) consists of those \(L^2\)-functions having an absolutely
summable expansion in the wavelet frame. Let \(f \in \mathcal{L}_1\), and
suppose we approximate \(f\) by repeatedly selecting the frame element yielding
the largest inner product with the current residual, orthogonalizing at each step
step; this procedure is known as the \textit{orthogonal greedy algorithm (OGA)}.
A classical result (cf.\ \cite[Theorem~2.1]{Barron:2008:ALG} and also
\cite[Section~3.1]{Shaham:2018:PAP}) states that the greedy approximant \(f_N\) obtained
after $N$ steps (and hence is a linear combination of at most \(N\) elements in the wavelet frame) satisfies an \(L^2\) error bound of order \((N+1)^{-1/2}\).

\begin{theorem}[\cite{Barron:2008:ALG,Shaham:2018:PAP}]
Let \(f \in \mathcal{L}_1\) and \(\{f_N\}\) be the sequence of greedy approximants
produced by OGA from the wavelet frame \(\{\psi_{k,b}\}_{(k,b)\in\Lambda}\). Then
\begin{equation}\label{ineq:f_N:OGA}
\left\|f - f_N\right\|_{L^2}
\;\leq\;
\left\|f\right\|_{\mathcal{L}_1}\,(N+1)^{-1/2}, \quad\mbox{for each }N\in\NN.
\end{equation}
    \label{thm: OGA}
\end{theorem}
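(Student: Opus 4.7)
The plan is to assemble three pieces: (i) verify that the rescaled activation $S_k(x,b)=2^k\sigma(2^{k/d}(x-b))$ from \eqref{eqn: ACHA Sk} satisfies the averaging-kernel axioms of Definition \ref{def:averaging}, so that Theorem \ref{thm: wavelet frame approximation in homogeneous type space} yields a wavelet frame $\{\psi_{k,b}\}$; (ii) apply the OGA bound of Theorem \ref{thm: OGA} to produce an $N$-term approximant $f_N=\sum_{i=1}^{N} c_i\,\psi_{k_i,b_i}$ with $\|f-f_N\|_{L^2}\le\|f\|_{\mathcal{L}_1}(N+1)^{-1/2}$; (iii) rewrite each wavelet $\psi_{k_i,b_i}$ as a sum of exactly two W$\vec{B}$-neurons, producing a parameter set $\boldp$ of length $2N$ with $\Psi_{\text{W}\vec{B}}[\boldp]=f_N$ identically, from which \eqref{eqn: main theorem} follows.

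Step (i) is the main analytic content. Condition (C1) is immediate from $\int\sigma=1$, and the symmetry of $S_k$ follows from $\sigma(-x)=\sigma(x)$ together with $S_k(x,b)=2^k\sigma(2^{k/d}(x-b))=2^k\sigma(2^{k/d}(b-x))=S_k(b,x)$. The pointwise decay (C2) should be a direct translation of \eqref{eqn: activation condition for AK} after the change of variable $\rho(x,y)=c\|x-y\|^d$: the prefactor $2^k$ and the dilation by $2^{k/d}$ combine to match the required envelope $2^{-k\epsilon}/(2^{-k}+\rho(x,y))^{1+\epsilon}$. For the Lipschitz estimate (C3), the natural route is to write
\[ S_k(x,b)-S_k(x',b)=2^{k(1+1/d)}\int_0^1 \nabla\sigma\bigl(2^{k/d}(x_t-b)\bigr)\cdot(x-x')\,dt, \]
with $x_t=x'+t(x-x')$, and control $|\nabla\sigma|$ using \eqref{eqn: activation condition for AK}; the restriction $\rho(x,x')\le\frac{1}{2A}(2^{-k}+\rho(x,y))$ is what keeps $x_t$ close enough to $x$ that the envelope bound transfers. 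The double-Lipschitz condition (C4) is the most delicate step: iterating this argument in both the $x$- and $y$-variables produces a mixed second difference controlled by $\nabla^2\sigma$, which is precisely why $\sigma$ is assumed twice-differentiable. The main obstacle is choosing exponents $\eta,\epsilon\in(0,1/d]$ so that (C3)–(C4) close with the correct decay, which forces \eqref{eqn: activation condition for AK} to simultaneously control $\sigma$, $\nabla\sigma$, and $\nabla^2\sigma$ with a common tail.

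Once (i) is established, step (ii) is an immediate appeal to Theorem \ref{thm: OGA}. For step (iii), substituting \eqref{eqn: ACHA Sk} into \eqref{eqn: DengHan wavelet} yields
\[ \psi_{k,b}(x)=2^{k/2}\,\sigma\bigl(2^{k/d}(x-b)\bigr)-2^{(k-2)/2}\,\sigma\bigl(2^{(k-1)/d}(x-b)\bigr), \]
identifying $\psi_{k,b}$ as the sum of two expressions of the form $\alpha\,\sigma(\gamma x+\vec{\theta})$: the first with $\gamma=2^{k/d}$, $\vec{\theta}=-2^{k/d}b$, $\alpha=2^{k/2}$, and the second with $\gamma=2^{(k-1)/d}$, $\vec{\theta}=-2^{(k-1)/d}b$, $\alpha=-2^{(k-2)/2}$. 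Multiplying by the OGA coefficients $c_i$ and concatenating the $N$ pairs yields a parameter set of exactly $2N$ entries, of the shape prescribed in \eqref{eqn: parameter set of ourNN}, for which $\Psi_{\text{W}\vec{B}}[\boldp]=f_N$ holds exactly; since this identification introduces no approximation error, the $L^2$ bound in \eqref{eqn: main theorem} is inherited verbatim from Theorem \ref{thm: OGA}. Thus the entire quantitative content of the theorem is concentrated in step (i), while steps (ii) and (iii) are essentially bookkeeping.
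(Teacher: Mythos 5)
Your proposal does not address the statement in question. The theorem you were asked to prove is the orthogonal greedy algorithm (OGA) error bound itself, namely that for \(f\in\mathcal{L}_1\) the greedy approximants satisfy \(\|f-f_N\|_{L^2}\le \|f\|_{\mathcal{L}_1}(N+1)^{-1/2}\). Your step (ii) reads ``apply the OGA bound of Theorem \ref{thm: OGA}'', i.e.\ you invoke exactly the inequality to be established as a black box, which makes the argument circular with respect to the stated target. Steps (i) and (iii) — verifying the averaging-kernel axioms (C1)--(C4) for \(S_k(x,b)=2^k\sigma(2^{k/d}(x-b))\) and rewriting each \(\psi_{k,b}\) as two W\(\vec{B}\)-neurons — are the content of Proposition \ref{thm:averaging} and of the proof of Theorem \ref{corollary: smooth WNN approximation}, not of Theorem \ref{thm: OGA}. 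In short, you have sketched a proof of the paper's main theorem while assuming the lemma you were supposed to prove. (For the record, the paper itself does not prove Theorem \ref{thm: OGA} either; it cites Theorem 2.1 of Barron--Cohen--Dahmen--DeVore and Section 3.1 of Shaham--Cloninger--Coifman.)

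A genuine proof runs along the following standard lines and uses none of your three steps. Let \(r_N=f-f_N\) be the residual after \(N\) steps of OGA, so that \(r_N\) is orthogonal to \(\mathrm{span}\{g_1,\dots,g_N\}\), where \(g_n\) is the (normalized) dictionary element selected at step \(n\). Writing \(f=\sum_{(k,b)}c_{k,b}\psi_{k,b}\) with \(\sum|c_{k,b}|\) arbitrarily close to \(\|f\|_{\mathcal{L}_1}\), orthogonality gives
\[
\|r_N\|_{L^2}^2=\langle r_N,f\rangle=\sum_{(k,b)}c_{k,b}\langle r_N,\psi_{k,b}\rangle\le \|f\|_{\mathcal{L}_1}\sup_{(k,b)}\left|\langle r_N,\psi_{k,b}\rangle\right|,
\]
so the greedily selected element satisfies \(|\langle r_N,g_{N+1}\rangle|\ge \|r_N\|_{L^2}^2/\|f\|_{\mathcal{L}_1}\). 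Since \(f_{N+1}\) is the orthogonal projection onto a space containing \(f_N+\langle r_N,g_{N+1}\rangle g_{N+1}\), one gets the recursion \(a_{N+1}\le a_N\bigl(1-a_N/M^2\bigr)\) with \(a_N:=\|r_N\|_{L^2}^2\) and \(M:=\|f\|_{\mathcal{L}_1}\), and an elementary induction yields \(a_N\le M^2/(N+1)\), which is \eqref{ineq:f_N:OGA}. Note that this argument requires controlling the norms of the dictionary elements (one applies it to the normalized frame \(\{\psi_{k,b}/\|\psi_{k,b}\|_{L^2}\}\) and absorbs the normalization into the \(\mathcal{L}_1\) coefficients); it nowhere uses the averaging-kernel structure, which is why the theorem is stated for a general wavelet frame.
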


At this point, we recall the definition of W$\vec{B}$-Net with an activation function \(\sigma\) and a parameter set~\(\boldp\)~(see \eqref{eqn: SW-VB Net}). Under the settings of the following section (Section~\ref{sec:wavelet:approx}), we can interpret the approximation function $f_N$ as a W$\vec{B}$-Net of $2N$ terms:
$$
\Psi_{\text{W}\vec{B}} \left[\boldp\right](\vec{x}) = \sum_{n=1}^{2N} \alpha_n\,\sigma(\gamma_n \vec{x} + \vec{\theta}_n), \quad \vec{x}\in\mathbb{R}^d,
$$
where $\alpha_n,\gamma_n,\vec{\theta}_n$ are learnable parameters. The parameter $N$ is related to the number of nodes in the neural network; see the end of Section \ref{subsec:main} for further details.

\section{Wavelet-based neural approximation}\label{sec:wavelet:approx}

In this section, we develop a neural approximation based on the wavelet frame theory introduced in Section \ref{subS:homogeneous}. To do so, we connect the kernels \(\{S_k\}\) in Definition \ref{def:averaging}, which are used to form the wavelet system \(\{\psi_{k,b}\}\), with the neural network's activation function~\(\sigma\). The following definition establishes this link. 
This approach of defining the kernel \(S_k\) in terms of the activation function~\(\sigma\) is first introduced in \cite{Shaham:2018:PAP}, where \(\sigma\) is specifically chosen as a multi-layer composition of linear combinations of ReLU functions. Additionally, in \cite{Frischauf:2024:QNN}, \(\sigma\) is formulated as a radial quadratic function. 

In the present paper, we further extend these ideas to identify more general activation functions for W$\vecB$-Net by providing the sufficient conditions on~\(\sigma\) under which \(\{\psi_{k,b}\}\) forms a wavelet frame.

\begin{definition}\label{def:Sk and psi_kb}
Let \(\sigma \in L^2(\mathbb{R}^d)\).
For \( k \in \mathbb{Z} \), we define
\begin{align}
&S_k({x}, b) := 2^k \sigma(2^{k/d}({x} - b)), \quad x,b\in\mathbb{R}^d,\label{def:Sk} \\
&\psi_{k,b}(x):= 2^{-k/2}D_k({x}, b) = 2^{-k/2} \left( S_k({x}, b) - S_{k-1}({x}, b) \right), \quad x,b\in\mathbb{R}^d.\label{def:psi_kb}
\end{align}
\end{definition}

For convenience, we continue using the notation $\psi_{k,b}$, even when \(\{S_k\}\) does \textit{not} form a family of averaging kernels, in which case the collection \(\{\psi_{k,b}\}\) may \textit{not} a wavelet system. In Theorems~\ref{corollary: smooth WNN approximation} and~\ref{thm:wave:error}, we use \(\psi_{k,b}\) to denote the wavelet system under which \(\{S_k\}\) does form a family of averaging kernels, aligning with Definition~\ref{def: wavelet system}. In contrast, for other results such as Theorem~\ref{lem:disconti}, we do not require the conditions in Definition~\ref{def:averaging}, and thus \(\{S_k\}\) and \(\{\psi_{k,b}\}\) there need \textit{not} be averaging kernels or a wavelet system, respectively. Nevertheless, we maintain the same notation throughout for simplicity.

\subsection{Main results}\label{subsec:main}

In this subsection, we present our main results. We first establish sufficient conditions for, possibly sign-changing, activation functions~$\sigma$ that ensure their associated kernels $\{S_k\}$ to satisfy the conditions in Definition~\ref{def:averaging}. We will then apply the wavelet frame theory.

\begin{prop}\label{thm:averaging}
    Let \(\sigma:\mathbb{R}^d \to \mathbb{R}\) be a twice-differentiable function satisfying \(\int_{\mathbb{R}^d} \sigma(x)dx =1\) and, for every \(x\in\R^d\), \(\sigma(-x)= \sigma(x)\) and 
    \begin{equation}
        \lVert \nabla_x^j \sigma(x) \rVert \leq \frac{C'}{\left(c^{-1} + \lVert x \rVert^d\right)^{1+\epsilon+j/d}}, \quad j=0,1,2
        \label{eqn: activation condition for AK}
    \end{equation}
    with some constant \(C'>0\).
    Then \(\{S_k\}_{k\in\mathbb{Z}}\) defined by (\ref{def:Sk}) is a family of averaging kernels. 
\end{prop}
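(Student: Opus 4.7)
The plan is to verify the four conditions (C1)--(C4) in Definition~\ref{def:averaging} for the kernels $S_k(x,y)=2^k\sigma\!\left(2^{k/d}(x-y)\right)$. The symmetry $S_k(x,y)=S_k(y,x)$ is immediate from $\sigma(-x)=\sigma(x)$, and (C1) follows by the change of variables $u=2^{k/d}(x-y)$, whose Jacobian determinant has absolute value $2^k$, reducing the integral to $\int\sigma=1$. The strategy for (C2)--(C4) is to exploit self-similarity: setting $\tilde x=2^{k/d}x$, $\tilde y=2^{k/d}y$ one has $\rho(\tilde x,\tilde y)=2^k\rho(x,y)$, so each inequality at general $k$ follows by rescaling from its counterpart at $k=0$. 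I therefore verify everything at scale $k=0$ and restore $k$ at the end.

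For (C2) at $k=0$, the $j=0$ bound in \eqref{eqn: activation condition for AK} together with the identity $c^{-1}+\|x-y\|^d=c^{-1}(1+\rho(x,y))$ gives $|\sigma(x-y)|\le C'c^{1+\epsilon}(1+\rho(x,y))^{-(1+\epsilon)}$; rescaling produces the form required in (C2). For (C3) at $k=0$, I apply the mean value theorem to obtain
\[
\sigma(x-y)-\sigma(x'-y)=\nabla\sigma(\xi-y)\cdot(x-x'),\qquad \xi\in[x,x'],
\]
combine with the $j=1$ bound and the identity $\|x-x'\|=c^{-1/d}\rho(x,x')^{1/d}$, and extract the factor $\rho(x,x')^{1/d}\bigl(c^{-1}+\|\xi-y\|^d\bigr)^{-(1+\epsilon+1/d)}$. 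The auxiliary observation is that the hypothesis $\rho(x,x')\le\tfrac{1}{2A}(1+\rho(x,y))$ with $A=3^d/2$ forces $c^{-1}+\|\xi-y\|^d$ to be comparable to $c^{-1}+\|x-y\|^d$, with constants depending only on $d$; I split into the regimes $\|x-y\|^d\ge c^{-1}$ and $\|x-y\|^d<c^{-1}$ and use the reverse triangle inequality, noting that $2^{1/d}/3<1$ for every $d\ge 1$. Rearranging yields (C3) with H\"older exponent $\eta=1/d$, which coincides with $\theta$.

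For (C4), I write the four-term expression as a double difference $\bigl(\sigma(x-y)-\sigma(x'-y)\bigr)-\bigl(\sigma(x-y')-\sigma(x'-y')\bigr)$ and apply the mean value theorem twice, first in the $x$-variable and then in the $y$-variable, obtaining a factor $\nabla^2\sigma(\xi-\zeta)$ with $\xi\in[x,x']$ and $\zeta\in[y,y']$. The $j=2$ decay bound then supplies $\bigl(c^{-1}+\|\xi-\zeta\|^d\bigr)^{-(1+\epsilon+2/d)}$, and the simultaneous smallness hypotheses on $\rho(x,x')$ and $\rho(y,y')$ allow the comparability lemma from (C3) (applied in both variables) to replace $\|\xi-\zeta\|^d$ by $\|x-y\|^d$ up to a constant. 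Rescaling recovers (C4). The principal technical obstacle I anticipate is this comparability lemma: the bounds must be uniform in $k$, which is why the proof is organized to reduce to $k=0$, and a careful two-regime case analysis is needed to obtain constants depending only on $d$; once the lemma is isolated, the remaining steps are direct algebraic substitution.
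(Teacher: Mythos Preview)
Your plan coincides with the paper's: verify (C1)--(C4) directly, handle (C3) and (C4) via the mean value theorem (once and twice, respectively), and reduce both to a comparability statement asserting that $c^{-1}+\|\xi-\zeta\|^d$ is bounded below by a dimensional constant times $c^{-1}+\|x-y\|^d$ whenever $\xi\in[x,x']$, $\zeta\in[y,y']$ and the hypotheses on $\rho(x,x'),\rho(y,y')$ hold. Your reduction to $k=0$ by rescaling is a cosmetic simplification the paper does not make, but it changes nothing of substance.

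The one substantive difference is how the comparability lemma is proved, and here your sketch has a small gap. Your reverse-triangle argument with the split $\|x-y\|^d\gtrless c^{-1}$ is fine for (C3), since $2^{1/d}/3<1$ for every $d\ge1$. But when you ``apply it in both variables'' for (C4), the same split would require $1-\tfrac{2\cdot 2^{1/d}}{3}>0$, which \emph{fails} for $d=1$ (the left side equals $-\tfrac13$); and if instead you try to chain the (C3) lemma twice, the hypothesis $\rho(y,y')\le 3^{-d}(1+\rho(x,y))$ does not directly yield $\rho(y,y')\le 3^{-d}(1+\rho(\xi,y))$, because the first step only gives a \emph{lower} bound on $1+\rho(\xi,y)$. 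The fix is easy---raise the threshold in the case split (e.g.\ $\|x-y\|^d\ge 3c^{-1}$), or do what the paper does: use the power-mean (Jensen) inequality $\|x-y\|^d\le 3^{d-1}\bigl(\|\xi-\zeta\|^d+\|x-\xi\|^d+\|y-\zeta\|^d\bigr)$ to get $\|\xi-\zeta\|^d\ge 3^{1-d}\|x-y\|^d-\|x-\xi\|^d-\|y-\zeta\|^d$, from which $c^{-1}+\|\xi-\zeta\|^d\ge 3^{-d}\bigl(c^{-1}+\|x-y\|^d\bigr)$ follows for every $d\ge1$ without any case analysis.
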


We defer the proof of the proposition to Section~\ref{subsec: proof}. We closely follow the steps of the proof in  \cite{Frischauf:2024:QNN} and \cite{Shaham:2018:PAP}. Below are examples of activation functions that comply with the \textit{averaging kernel conditions}, meaning they meet all the assumptions in Proposition~\ref{thm:averaging}.

\begin{example}
\label{ex: smooth activation}
The following functions $\sigma$ satisfy the averaging kernel conditions. Here, $m$ is a real constant, and \(C\) and \(C_m\) are normalizing constants chosen so that \( \int_{\mathbb{R}^d} \sigma(x) \, dx = 1 \).
\begin{itemize}
\item Let
$$\sigma(x)=C_m\, \widetilde{\sigma}(x)\sin(m x), \quad x\in\RR,$$
where  $\widetilde{\sigma}$ is an odd function with respect to $x$ and is defined as
\begin{equation*}
    \widetilde{\sigma}(x) = \begin{cases}
    \widetilde{\sigma}_0(x), & \lvert x \rvert \leq 1,\\
       1/x^{\alpha}, & \lvert x \rvert >1,
    \end{cases}
\end{equation*}
for some bounded function \(\widetilde{\sigma}_0\) that is smoothly connected at \(\lvert x \rvert = 1\) in such a way that \(\widetilde{\sigma}\) is twice differentiable. Here, $\alpha$ is a constant \(> 3\).

\item Using the positive-valued activation functions $\widetilde{\sigma}$ from \cite{Frischauf:2024:QNN} (or their generalizations), let
$$\sigma({x})=
C \,\widetilde{\sigma}\left(r^2- \lVert {x} \rVert^2\right)\cos({\tau}\cdot {x}), \,\,
C_m \,\widetilde{\sigma}(r^2-\|{x}\|^2)\,\frac{\sin(m\|{x}\|^2)}{\|{x}\|^{2}},\quad {x}\in\RR^d.$$
Here, \({\tau}\) is a constant vector in $\RR^d$.
\end{itemize}
\end{example}

\begin{example}
    We continue to use the same notation \(C\) as in the previous example. Let us consider activation functions \(\sigma\) of the form (decaying function)*(oscillatory function). Our goal is to determine sufficient conditions on the oscillatory component, assuming that the decaying component satisfies a suitable decay condition.
    Let $\sigma$ be
    \begin{equation*}
        \sigma(x) = C\, \widetilde{\sigma}(x) \text{Osc}(x), \quad x \in \mathbb{R}^d
    \end{equation*}
    where \(\widetilde{\sigma}\in L^2(\mathbb{R}^d)\) be a twice-differentiable function that decays as described in (\ref{eqn: activation condition for AK}).
    If Osc\(:\mathbb{R}^d \to \mathbb{R}\) is a sign-changing function that is twice differentiable, uniformly bounded up to its second derivatives, and chosen to satisfy the symmetry condition \(\sigma(-x)=\sigma(x)\), then \(\sigma\) satisfies the averaging kernel conditions.
\end{example}

Under the conditions given in Proposition~\ref{thm:averaging}, we now invoke Theorems~\ref{thm: wavelet frame approximation in homogeneous type space} and \ref{thm: OGA}, where the first affirms wavelet frames and the second provides an error estimate for the corresponding approximation.
By applying these theorems to the kernels $\{S_k\}$ defined in \eqref{def:Sk}, we derive the following theorem.

\begin{theorem}\label{thm:wave:error}

Let \(\sigma:\RR^d\rightarrow \RR\) be a function satisfying the conditions in Proposition~\ref{thm:averaging}, and \(\{S_k\}_{k\in\mathbb{Z}}\) be the corresponding family of averaging kernels, i.e., \(S_k(x,b)= 2^k \sigma(2^{k/d}(x-b))\). Then \(\{\psi_{k,b}\}_{(k,b)\in \ZZ \times \R^d}\) defined as in \eqref{def:psi_kb}  constitutes a wavelet system. Furthermore, we have the following results.

\begin{enumerate}
    \item [\rm{(i)}] The wavelet system \(X(\psi)=\{\psi_{k,b}\}_{(k,b)\in\Lambda}\) derived from the averaging kernels $\{S_k\}$ admits its dual wavelet system \(X(\widetilde{\psi})=\{\widetilde{\psi}_{k,b}\}_{(k,b)\in\Lambda}\), so that, for all~\(f\in L^2(\mathbb{R}^d)\),
     \begin{equation*}
         f = \sum_{(k,b)\in \Lambda} \langle f, \widetilde{\psi}_{k,b} \rangle\, \psi_{k,b}
     \end{equation*}
    where \(\Lambda = \{(k,b) \in \mathbb{Z} \times \mathbb{R}^d: b \in 2^{-k/d} \mathbb{Z}^d\}\). Hence, \(X(\psi)\) and \(X(\widetilde{\psi})\) are wavelet frames.
    \item [\rm{(ii)}] Moreover, for every \(f \in \mathcal{L}_1\) and every \(N \in\mathbb{N}\), there exists a function
    \begin{equation*}
        f_N \in \text{span}_N\left(X(\psi)\right) \subseteq \mathcal{L}_1,
    \end{equation*}
    where span\(_N(X(\psi))\) denotes a collection of linear combinations of the wavelet system \(X(\psi)\) of at most \(N\) terms, such that 
    \begin{equation}
        \left\lVert f - f_N \rVert_{L^2} \leq \lVert f \right\rVert_{\mathcal{L}_1} (N+1)^{-1/2}.
        \label{eqn: wavelet frame approximation error}
    \end{equation}
\end{enumerate}
    \label{theorem: wavelet expansion}
\end{theorem}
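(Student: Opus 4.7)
The plan is to assemble three results from the preliminaries---Proposition~\ref{thm:averaging} and Theorems~\ref{thm: wavelet frame approximation in homogeneous type space} and~\ref{thm: OGA}---whose hypotheses are all met by the assumptions on $\sigma$. Since Proposition~\ref{thm:averaging} already carries out the technical work of verifying the four averaging kernel conditions (C1)--(C4), the present theorem reduces to feeding that conclusion into the Deng--Han frame machinery and then into the OGA error bound. The proof splits naturally along the two parts (i) and (ii).

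First, I would invoke Proposition~\ref{thm:averaging} on the given $\sigma$: its hypotheses (twice differentiability, unit integral, evenness, and the decay condition \eqref{eqn: activation condition for AK}) coincide exactly with those of the current theorem, so $\{S_k\}_{k\in\mathbb{Z}}$ with $S_k(x,b)=2^k\sigma(2^{k/d}(x-b))$ forms a family of averaging kernels. Definition~\ref{def: wavelet system} then immediately promotes $\{\psi_{k,b}\}_{(k,b)\in\mathbb{Z}\times\mathbb{R}^d}$ defined by $\psi_{k,b}(x)=2^{-k/2}(S_k(x,b)-S_{k-1}(x,b))$ to a wavelet system, which gives the first unnumbered assertion of the theorem.

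For part (i), I would then apply Theorem~\ref{thm: wavelet frame approximation in homogeneous type space} verbatim to this wavelet system. This produces a dual discrete wavelet system $X(\widetilde{\psi})=\{\widetilde{\psi}_{k,b}\}_{(k,b)\in\Lambda}$ on the lattice $\Lambda=\{(k,b)\in\mathbb{Z}\times\mathbb{R}^d:b\in 2^{-k/d}\mathbb{Z}^d\}$, together with the $L^2$-expansion $f=\sum_{(k,b)\in\Lambda}\langle f,\widetilde{\psi}_{k,b}\rangle\,\psi_{k,b}$ for every $f\in L^2(\mathbb{R}^d)$. The existence of such a biorthogonal-type expansion is exactly what is needed to identify both $X(\psi)$ and $X(\widetilde{\psi})$ as wavelet frames, finishing (i).

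For part (ii), I would feed the frame $X(\psi)$ into Theorem~\ref{thm: OGA}: for each $f\in\mathcal{L}_1$ and each $N\in\mathbb{N}$, the orthogonal greedy algorithm produces an approximant $f_N$ that is a linear combination of at most $N$ elements of $X(\psi)$ and satisfies the bound \eqref{ineq:f_N:OGA}, which is precisely \eqref{eqn: wavelet frame approximation error}. The membership $f_N\in\mathcal{L}_1$ is automatic, since a finite linear combination of frame elements trivially admits an absolutely summable expansion. No substantial obstacle remains---the genuinely hard step is already encapsulated in Proposition~\ref{thm:averaging}---but one small point to be attentive to is ensuring that the OGA is run against the primal frame $X(\psi)$ (rather than the dual $X(\widetilde{\psi})$), so that $f_N$ indeed lies in $\mathrm{span}_N(X(\psi))$ as stated.
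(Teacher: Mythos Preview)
Your proposal is correct and matches the paper's approach exactly: the paper does not give a standalone proof of this theorem but simply states that it follows by invoking Proposition~\ref{thm:averaging} and then Theorems~\ref{thm: wavelet frame approximation in homogeneous type space} and~\ref{thm: OGA}, which is precisely the assembly you describe. Your additional remarks (that $f_N\in\mathcal{L}_1$ is automatic for finite sums, and that OGA is run against the primal frame) are accurate and add clarity the paper leaves implicit.
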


Finally, considering the definitions of $S_k$ and $\psi_{k,b}$ derived from $\sigma$, we can connect the wavelet approximation \(f_N\) (via the wavelet system $X(\psi)$) to a neural network \(\Psi_{\text{W}\vec{B}}\) that uses~$\sigma$ as its activation function. In particular, the preceding wavelet approximation and error estimation results can be understood in the context of a neural network framework. With this understanding, we now prove Theorem~\ref{corollary: smooth WNN approximation}.

\begin{proof}[Proof of Theorem~\ref{corollary: smooth WNN approximation}]
    By Theorem~\ref{theorem: wavelet expansion}, for given \(f\in\mathcal{L}_1\) and \(N \in \mathbb{N}\), there exists \(f_N \in \text{span}_N(X(\psi))\) satisfying \eqref{eqn: wavelet frame approximation error}. Here, \(f_N\) can be written by
    \begin{equation}
        f_N(\vec{x}) = \sum_{k,b} \chi_{N}(k,b) c_{k,b} \psi_{k,b}(\vec{x}),
        \label{eqn:f_N}
    \end{equation}
    where $\chi_N(k,b)$ equals $1$ for at most $N$ terms and is zero otherwise.
    Then we have
    \begin{eqnarray*}
        f_N(\vec{x}) & = & \sum_{k,b} \chi_{N}(k,b) c_{k,b} 2^{-k/2} (S_k(x,b) - S_{k-1}(x,b))\\
        & = & \sum_{k,b} \chi_{N}(k,b) c_{k,b} 2^{-k/2} \left(2^k\sigma(2^{k/d}(x-b))- 2^{k-1} \sigma(2^{(k-1)/d}(x-b))\right)\\
        & = & \sum_{k,b} \chi_{N}(k,b) c_{k,b} 2^{k/2} \sigma(2^{k/d}(x-b)) - \sum_{k,b} \chi_{N}(k,b) c_{k,b} 2^{k/2-1} \sigma(2^{(k-1)/d} (x-b))\\
        & = & \Psi_{\text{W}\vec{B}}\left[\boldp\right](\vec{x})
    \end{eqnarray*}
for some parameter set $\boldp$. 
\end{proof}

\begin{figure}[t]
    \centering
    \includegraphics[width=\linewidth]{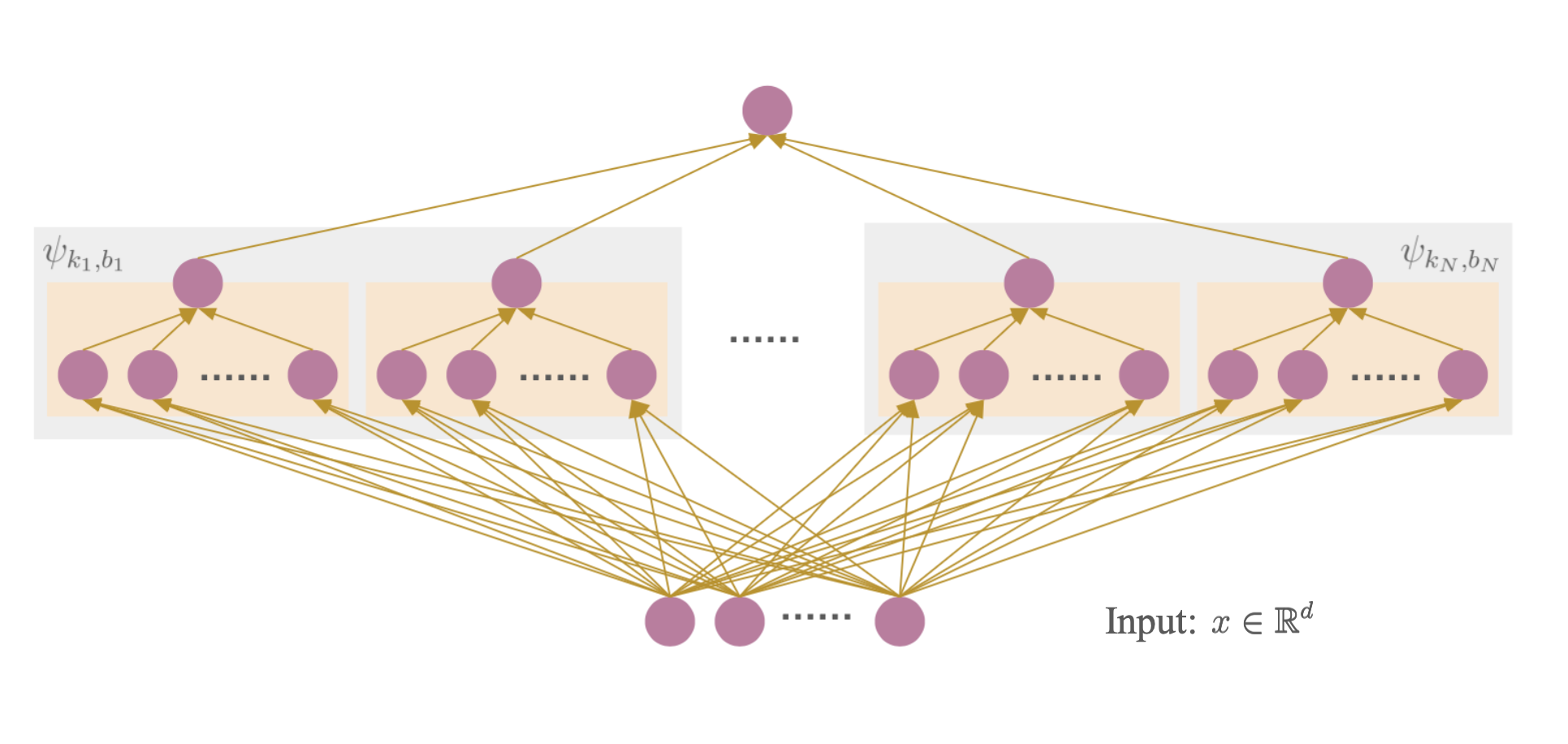}
    \caption{An architectural sketch of our network in Theorem \ref{corollary: smooth WNN approximation}.}
    \label{fig:ourNN}
\end{figure}

Figure~\ref{fig:ourNN} shows the network architecture for $\Psi_{\text{W}\vec{B}}\left[\boldp\right]$. In this W$\vec{B}$-Net, the activation function \(\sigma=\sigma_{d\to1}\) maps $\RR^d$ to $\RR$, changing the dimensionality--and thus the node count--before and after the mapping; these transitions are highlighted in orange boxes. We consider the first affine layer and the activation function together as a single hidden layer. Observe that each wavelet term $\psi_{k,b}$ can be written as
\begin{equation*}
    \psi_{k,b}(x) = 2^{k/2}\sigma(2^{k/d}(x-b)) - 2^{k/2-1} \sigma(2^{(k-1)/d} (x-b)),
\end{equation*}
which is essentially a linear combination of two activation terms. In the figure, each $\psi_{k,b}$ is represented as a gray box and corresponds to two nodes in the hidden layer. Consequently, when $\Psi_{\text{W}\vec{B}}\left[\boldp\right]$ incorporate $N$ wavelet terms, the resulting network has $2N$ nodes in its hidden layer.  

In Appendix~\ref{Appendix}, we compare our network with those in \cite{Frischauf:2024:QNN,Shaham:2018:PAP}, which also use the wavelet-based framework for neural network approximation.

\subsection{Proof of Proposition \ref{thm:averaging}}
\label{subsec: proof}
Recall that we set $\rho(x,b)=c\|x-b\|^d$ for $x,d\in\RR^d$. We begin with a lemma that will be useful in proving Proposition~\ref{thm:averaging}.
\begin{lemma}
Fix $k\in\NN$. For the triple \((x, x', b)\) satisfying 
    \begin{equation*}
        \rho(x,x') \leq 2^{-d}\big(2^{-k}+\rho(x,b)\big),
    \end{equation*}
it holds that for any \(z\) between \(x\) and \(x'\),
    \begin{equation*}
        \lVert z - b \rVert^d \geq 2^{-d} \big( \lVert x-b \rVert^d - c^{-1}2^{-k}\big).
    \end{equation*}
    Here, `$z$ between $x$ and $x'$' indicates that $z$ on the open line segment connecting $x$ and $x'$. 
    
    Furthermore, for the quadruple \((x,x',b,b')\) satisfying 
    \begin{equation*}
        \rho(x,x') \leq 3^{-d}\big(2^{-k} + \rho(x,b)\big)\quad \text{and} \quad \rho(b,b') \leq 3^{-d} \big(2^{-k} + \rho(x,b)\big),
    \end{equation*}
    it holds that for any \(z\) between \(b\) and \(b'\), and \(z'\) between \(x\) and \(x'\),
    \begin{equation*}
        \lVert z'-z \rVert^d \geq 3^{-d} \big( \lVert x-b \rVert^d - c^{-1} 2^{1-k} \big).
    \end{equation*}
    \label{lemma: distance btw z-z'}
\end{lemma}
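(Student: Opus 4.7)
The plan is to combine the Euclidean triangle inequality with the elementary power-mean inequality $(a_1+\cdots+a_n)^d \le n^{d-1}(a_1^d + \cdots + a_n^d)$, a direct consequence of the convexity of $t \mapsto t^d$ on $[0,\infty)$ for $d \ge 1$. The key observation is that the constants $2^{-d}$ and $3^{-d}$ appearing in the two sets of hypotheses are tailored precisely to the $n=2$ and $n=3$ instances of this inequality, so that the resulting algebra collapses cleanly. Throughout, I will use the identity $\rho(u,v) = c\|u-v\|^d$ to convert the hypotheses into statements about Euclidean norms.

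For the first assertion, since $z$ lies on the open segment between $x$ and $x'$, the triangle inequality gives $\|x-b\|\le\|z-b\|+\|z-x\|\le\|z-b\|+\|x-x'\|$. Raising to the $d$-th power and invoking the convexity inequality with $n=2$ yields
\begin{equation*}
\|x-b\|^d \;\le\; 2^{d-1}\bigl(\|z-b\|^d + \|x-x'\|^d\bigr),
\end{equation*}
which I rearrange to $\|z-b\|^d \ge 2^{1-d}\|x-b\|^d - \|x-x'\|^d$. Dividing the hypothesis $\rho(x,x')\le 2^{-d}(2^{-k}+\rho(x,b))$ by $c$ gives $\|x-x'\|^d \le 2^{-d}(c^{-1}2^{-k}+\|x-b\|^d)$; substituting this and using the identity $2^{1-d}-2^{-d}=2^{-d}$ produces the stated bound.

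For the second assertion, I observe that $z'\in[x,x']$ and $z\in[b,b']$ imply $\|x-z'\|\le\|x-x'\|$ and $\|z-b\|\le\|b-b'\|$. Chaining the triangle inequality along $x\to z'\to z\to b$ then gives $\|x-b\|\le \|x-x'\|+\|z'-z\|+\|b-b'\|$. Applying the $n=3$ convexity inequality and rearranging yields
\begin{equation*}
\|z'-z\|^d \;\ge\; 3^{1-d}\|x-b\|^d - \|x-x'\|^d - \|b-b'\|^d.
\end{equation*}
Substituting both hypotheses (each of which, after division by $c$, becomes a bound of the form $3^{-d}(c^{-1}2^{-k}+\|x-b\|^d)$) and simplifying via $3^{1-d}-2\cdot 3^{-d}=3^{-d}$ delivers $\|z'-z\|^d \ge 3^{-d}\bigl(\|x-b\|^d - c^{-1}2^{1-k}\bigr)$, which is exactly the target inequality; the extra factor of $2$ in $2^{1-k}$ accounts for the two hypothesis-contributed copies of $c^{-1}2^{-k}$.

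I do not anticipate any serious obstacle: once the right tool is identified, the argument is essentially mechanical. The only point requiring attention is the bookkeeping of constants, and it helps to record in advance that the factors $n^{d-1}$ generated by the power-mean step are exactly cancelled by the $n^{-d}$ weights present in the hypotheses, which is precisely what forces the residuals to take the clean forms stated in the lemma.
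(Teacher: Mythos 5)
Your proof is correct and follows essentially the same route as the paper: both arguments combine the triangle inequality with the power-mean/Jensen inequality $(a_1+\cdots+a_n)^d\le n^{d-1}(a_1^d+\cdots+a_n^d)$ for $n=2,3$, and the constant bookkeeping matches exactly. The only cosmetic difference is that the paper parametrizes $z=b+t(b'-b)$, $z'=x+\tilde t(x'-x)$ and bounds $t^d,\tilde t^d\le 1$, whereas you bound $\lVert x-z'\rVert\le\lVert x-x'\rVert$ and $\lVert z-b\rVert\le\lVert b-b'\rVert$ directly.
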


\begin{proof}
Since the two inequalities can be proved using the same argument, we solely present the proof of the second. To show the second inequality, we first recall Jensen's inequality: for $u,v,w\in\RR^+$,
$\lambda u^d + \lambda v^d +\lambda  w^d 
\geq (\lambda u+\lambda v +\lambda w)^d$ with $\lambda=1/3.$
This implies that \[u^d + v^d + w^d \geq 3^{1-d}(u+v+w)^d\quad\mbox{for }u,v,w \geq 0.\]
Set \(z' = x + \tilde{t}(x'-x)\) and \(z = b+ t(b'-b)\) for \(0 \leq t, \tilde{t} \leq 1\). 
By employing Jensen's inequality and the triangle inequality, we obtain that
    \begin{align*}
        \lVert z'-z \rVert^d 
        &=\lVert x + \tilde{t}(x'-x) - b - t(b'-b) \rVert^d \\
        & \geq  3^{1-d} \lVert x-b \rVert^d - \tilde{t}^d \lVert x'-x \rVert^d - t^d \lVert b'-b \rVert^d\\
        & \geq  3^{1-d} \lVert x-b \rVert^d - \lVert x'-x \rVert^d - \lVert b'-b \rVert^d\\
        & \geq  3^{1-d} \lVert x-b \rVert^d - 2 \cdot 3^{-d}c^{-1} \big(2^{-k} + c \lVert x-b \rVert^d\big)\\
        & =  3^{-d} \big(\lVert x-b \rVert^d - c^{-1}2^{1-k}\big).
    \end{align*}
\end{proof}

\begin{proof}[Proof of Proposition~\ref{thm:averaging}]
We now verify each of the conditions \eqnref{eqn: averaging kernel3}--\eqref{eqn: averaging kernel4} in Definition~\ref{def:averaging} one by one. 
In this proof, in accordance with Definition \ref{def:Sk}, we use $b$ for the variable of the second component of $S_k(\cdot,\cdot)$.

\smallskip

{\bf (\ref{eqn: averaging kernel3}).} One can easily find that 
    \begin{equation*}
        \int_{\mathbb{R}^d} S_k(x,b) db = \int_{\mathbb{R}^d} 2^k \varphi(2^{k/d} (x-b)) db = \int_{\mathbb{R}^d} \varphi(x) dx = 1.
    \end{equation*}

\smallskip

{\bf (\ref{eqn: averaging kernel1}).} We observe from \eqref{eqn: activation condition for AK} that
    \begin{equation*}
        \lvert \sigma(x) \rvert \leq \frac{C'}{\left(c^{-1} + \lVert x \rVert^d\right)^{1+\epsilon}}
    \end{equation*}
 and, thus,
    \begin{equation*}
        \lvert S_k(x,b) \rvert = 2^k \left\lvert \sigma (2^{k/d} (x-b)) \right\rvert \leq \frac{2^{-k\epsilon} c^{1+\epsilon}  C'}{\left(2^{-k} + c \lVert x-b \rVert^d\right)^{1+\epsilon}}.
    \end{equation*}
 Therefore, (\ref{eqn: averaging kernel1}) holds by setting \(C = c^{1+\epsilon} C'\).

\medskip

 {\bf (\ref{eqn: averaging kernel2}).} By the mean value theorem, it holds that
    \begin{equation*}
        \frac{\left\lvert S_k(x,b) - S_k(x',b)\right\rvert}{\rho(x,x')^{1/d}} 
        \leq \frac{1}{c^{1/d}} \sup_{z\text{ between }x, x'} \left\lVert \nabla_x S_k(z,b) \right\rVert.
    \end{equation*} 
We observe from \eqref{eqn: activation condition for AK} that
    \begin{equation}\notag
        \left\lVert \nabla_x \sigma(x) \right\rVert \leq \frac{C'}{\left(c^{-1} + \lVert x \rVert^d\right)^{1 + \epsilon + 1/d}}
    \end{equation}
and, hence, 
    \begin{align*}
        \lVert \nabla_x S_k(z,b) \rVert & =  2^{k(1+1/d)} \big\lVert \nabla_x\, \sigma(2^{k/d}(z-b)) \big\rVert\\
        & \leq  C'\, 2^{k(1+1/d)}\, \frac{1}{\left(c^{-1}+2^k \lVert z-b \rVert^d\right)^{1+\epsilon + 1/d}}.
  \end{align*}

Assume that $\rho(x, x') \leq 3^{-d}\left(2^{-k} + \rho(x, y)\right)$. 
Then, by Lemma~\ref{lemma: distance btw z-z'} and the fact that \(1-2^{-d} \geq 2^{-d}\), we observe
    \begin{equation*}
        c^{-1} + {2^k} \lVert z-b \rVert^d \geq c^{-1}(1-2^{-d}) + 2^{k}2^{-d} \lVert x - b \rVert^d \geq 2^{-d} \left(c^{-1} + 2^k \lVert x-b \rVert^d\right). 
    \end{equation*}

One can then obtain an upper bound of \(\lVert \nabla_x S_k(z,b) \rVert\) as
    \begin{align*}
    \left\lVert \nabla_x S_k(z,b) \right\rVert 
    & \leq  C'\,2^{k(1+1/d)} \,\frac{2^{1+d(1+\epsilon)}}{\left(c^{-1} + 2^k \lVert x-b \rVert^d\right)^{1+\epsilon+1/d}}\\
    & =  2^{1+d(1+\epsilon)}\, c^{1+\epsilon+1/d}\, C'\, \frac{1}{\left(2^{-k}+c \lVert x-b \rVert^d\right)^{1/d}} \frac{2^{-k\epsilon}}{\left(2^{-k}+c \lVert x-b \rVert^d\right)^{1+\epsilon}}.
    \end{align*}
 Therefore, (\ref{eqn: averaging kernel2}) holds with \(\eta = 1/d\) and \(C= 2^{1+d(1+\epsilon)} c^{1+\epsilon} C'\).

\medskip

{\bf (\ref{eqn: averaging kernel4}).} We first see that
    \begin{equation*}
        \frac{\left\lvert S_k(x,b) - S_k(x',b) - S_k(x,b') + S_k(x',b') \right\rvert}{\rho(x,x')^{1/d} \rho(b,b')^{1/d}} \leq \frac{1}{c^{2/d}} \frac{\left\lvert F(b) - F(b') \right\rvert}{\lVert b-b' \rVert}
    \end{equation*}
with 
    \begin{equation}\label{def:F}
        F(\cdot) := \frac{S_k(x,\cdot) - S_k(x',\cdot)}{\lVert x- x' \rVert}.
    \end{equation}
By the mean value theorem, we have 
    \begin{equation*}
        \frac{\left\lvert F(b) - F(b') \right\rvert}{\lVert b-b' \rVert} \leq \sup_{z \text{ between } b, b'} \left\lVert \nabla F(z) \right\rVert.
    \end{equation*}
    By again applying the mean value theorem to \eqref{def:F}, we finally obtain
    \begin{equation*}
        \frac{\left\lvert S_k(x,b) - S_k(x',b) - S_k(x,b') + S_k(x',b') \right\rvert}{\rho(x,x')^{1/d} \rho(b,b')^{1/d}} \leq \frac{1}{c^{2/d}}\sup_{z \text{ between } b, b'} \sup_{z' \text{ between } x, x'}\left\lVert\nabla^2_{x,b}\, S_k(z',z) \right\rVert. 
    \end{equation*}
    We observe from \eqref{eqn: activation condition for AK} that
    \begin{equation*}
        \big\lVert \nabla^2_{x,b} \, \sigma(x-b) \big\rVert \leq C' \frac{1}{\left(c^{-1} + \lVert x-b \rVert^d\right)^{1+\epsilon +2/d}}.
        \label{eqn: sufficient condition for general phi}
    \end{equation*}
    From this, we see that
    \begin{align*}
        \left\lVert \nabla^2_{x,b}\, S_k(z',z)\right \rVert 
        & =  2^{k(1+2/d)} \big\lVert \left(\nabla^2_{x,b}\, \sigma(x-b)\right)\big\vert_{x=2^{k/d}z', \,b = 2^{k/d}z}\big\rVert
        \\
        & \leq  C' 2^{k(1+2/d)} \, \frac{1}{\left(c^{-1} + 2^k \lVert z'-z \rVert^d\right)^{1+\epsilon + 2/d}}.
    \end{align*}

Assume that $\rho(x, x') \leq 3^{-d}\left(2^{-k} + \rho(x, b)\right)$ and $\rho(b, b') \leq 3^{-d}\left(2^{-k} + \rho(x, b)\right)$. Then, by Lemma~\ref{lemma: distance btw z-z'} and the the fact that \(1-2\cdot 3^{-d} \geq 3^{-d}\), we have
    \begin{equation*}
        c^{-1} + 2^{k} \lVert z' - z \rVert^d \geq c^{-1}(1-2 \cdot 3^{-d}) + 2^k 3^{-d} \lVert x - b \rVert^d \geq 3^{-d} \big(c^{-1} + 2^k \lVert x - b \rVert^d\big)
    \end{equation*}
for $z$ between $x,x'$, and $z'$ between $b,b'$. 
    Thus, we can bound \(\lVert \nabla^2_{x,b} S_k(z',z) \rVert\) as follows:
    \begin{align*}
        \left\lVert \nabla^2_{x,b}\, S_k(z',z) \right\rVert 
        & \leq  C' 2^{k(1+2/d)} \frac{3^{2 + d(1+\epsilon)}}{\left(c^{-1}+2^k \lVert x-b \rVert^d\right)^{1+\epsilon+2/d}}\\
        & =  3^{2+d(1+\epsilon)} c^{1+\epsilon+2/d}\, C' \,\frac{1}{\left(2^{-k}+ c \lVert x-b \rVert^d\right)^{2/d}} \frac{2^{-k\epsilon}}{\left(2^{-k}+ c \lVert x-b \rVert^d\right)^{1+\epsilon}}.
    \end{align*}
    Therefore \(\{S_k\}_{k\in\mathbb{Z}}\) satisfies the double Lipschitz condition, (\ref{eqn: averaging kernel4}), with \(\eta = 1/d\) and \(C = 3^{2+d(1+\epsilon)} c^{1+\epsilon} C'\).
\end{proof}

\section{Neural networks with non-smooth activation functions}\label{sec:general}

Until now, we have developed our approximation theory under the assumption that neural network activation functions are twice-differentiable. In this section, we build on those results to relax the smoothness requirement, thereby extending the theory to encompass more general activation functions, notably non-smooth ones. We use the notation
$$\operatorname{dist}(\sigma_1,\sigma_2):= \left(1+\frac{1}{\sqrt{2}}\right)\left\|\sigma_1-\sigma_2\right\|_{L^2},\quad \sigma_1,\sigma_2\in L^2(\RR^d).$$

Let $\sigma:\R^d \to \R$ be a twice-differentiable function satisfying the conditions in Proposition~\ref{thm:averaging}. We also adopt the settings of Theorems~\ref{corollary: smooth WNN approximation} and \ref{theorem: wavelet expansion}.
In particular, we employ the space $\mathcal{L}_1$ which is defined as in \eqnref{def:L1} with the discrete wavelet frame \(X(\psi)\) constructed in Theorem~\ref{theorem: wavelet expansion}.

Let $f\in \mathcal{L}_1$. Consider a sequence of approximations $f_N$, each formed as a linear combination of at most $N$ wavelet elements in $\{\psi_{k,b}\}_{(k,b)\in \Lambda}$, that satisfies the convergence criterion in \eqref{ineq:f_N:OGA}. Specifically,
\beq\label{f_N:expression}
f_N(\vec{x})=\sum_{(k,b)\in\Lambda} {\chi}_N (k,b) c_{k,b}\psi_{k,b}(\vec{x}), \quad\vec{x}\in\RR^d,
\eeq
where $\chi_N(k,b)$ equals $1$ for at most $N$ terms and is zero otherwise. 

\begin{theorem}\label{lem:disconti}
Let \(\sigma:\R^d \to \R\) be a twice-differentiable function satisfying conditions in Proposition~\ref{thm:averaging} and \(\mathcal{L}^1\) be its associated space. For any function $\sigma^{\dagger}\in L^2(\RR^d)$, let $\psi^\dagger_{k,b}$ be a collection of functions defined as in Definition~\ref{def:Sk and psi_kb} with $\sigma^\dagger$ in the place of $\sigma$.
For $f\in\mathcal{L}_1$ and $N\in\NN$, let $f_N$ be given as in \eqnref{f_N:expression}, and set \begin{equation}\label{def:fN_dagger}
f^\dagger_N:=\sum_{(k,b)\in\Lambda} {\chi}_N (k,b) c_{k,b}\psi^\dagger_{k,b}.
\end{equation}
Then we have
\begin{equation*}
\big\|f-f_N^\dagger\big\|_{L^2} \leq \|f\|_{\mathcal{L}_1}(N+1)^{-1/2}+ \operatorname{dist}(\sigma,\sigma^\dagger) \sum_{(k,b)\in \Lambda}\chi_N(k,b) \left|c_{k,b}\right|.
\end{equation*}
\end{theorem}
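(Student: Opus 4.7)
The plan is to split the error by triangle inequality as
\[
\bigl\|f-f_N^\dagger\bigr\|_{L^2} \;\leq\; \bigl\|f-f_N\bigr\|_{L^2} \;+\; \bigl\|f_N-f_N^\dagger\bigr\|_{L^2},
\]
and handle each summand separately. The first term is already taken care of by the OGA estimate \eqref{ineq:f_N:OGA}, since $\sigma$ satisfies the hypotheses of Proposition~\ref{thm:averaging} and $f_N$ is the greedy approximant of $f$ in the frame $X(\psi)$; this immediately yields $\|f-f_N\|_{L^2}\leq \|f\|_{\mathcal{L}_1}(N+1)^{-1/2}$.

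For the second term, I would use linearity of the definitions of $f_N$ and $f_N^\dagger$ to write
\[
f_N - f_N^\dagger \;=\; \sum_{(k,b)\in\Lambda}\chi_N(k,b)\,c_{k,b}\,\bigl(\psi_{k,b}-\psi_{k,b}^\dagger\bigr),
\]
and then apply the triangle inequality in $L^2$ so that the problem reduces to uniformly bounding $\|\psi_{k,b}-\psi_{k,b}^\dagger\|_{L^2}$ independently of $(k,b)\in\Lambda$. This is the crux of the argument, though it is not a genuine obstacle—rather, it is the content of the constant $1+\tfrac{1}{\sqrt{2}}$ appearing in $\operatorname{dist}(\cdot,\cdot)$.

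To obtain this uniform bound, I would expand
\[
\psi_{k,b}(x)-\psi_{k,b}^\dagger(x) \;=\; 2^{k/2}\bigl(\sigma-\sigma^\dagger\bigr)\!\bigl(2^{k/d}(x-b)\bigr) \;-\; 2^{k/2-1}\bigl(\sigma-\sigma^\dagger\bigr)\!\bigl(2^{(k-1)/d}(x-b)\bigr),
\]
using Definition~\ref{def:Sk and psi_kb}. A standard change of variables $y=2^{k/d}(x-b)$ in the first term, and $y=2^{(k-1)/d}(x-b)$ in the second, shows that the $L^2$-norm of the first term equals $\|\sigma-\sigma^\dagger\|_{L^2}$ and the $L^2$-norm of the second equals $\tfrac{1}{\sqrt{2}}\|\sigma-\sigma^\dagger\|_{L^2}$, both independent of $(k,b)$. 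Combining by the triangle inequality in $L^2$ yields
\[
\bigl\|\psi_{k,b}-\psi_{k,b}^\dagger\bigr\|_{L^2} \;\leq\; \Bigl(1+\tfrac{1}{\sqrt{2}}\Bigr)\bigl\|\sigma-\sigma^\dagger\bigr\|_{L^2} \;=\; \operatorname{dist}(\sigma,\sigma^\dagger).
\]

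Feeding this uniform bound back gives $\|f_N-f_N^\dagger\|_{L^2}\leq \operatorname{dist}(\sigma,\sigma^\dagger)\sum_{(k,b)\in\Lambda}\chi_N(k,b)|c_{k,b}|$, and adding the OGA estimate for $\|f-f_N\|_{L^2}$ produces the claimed inequality. I do not anticipate any genuine obstacle here: the proof is essentially a two-line triangle inequality combined with the scaling invariance of the $L^2$-norm under the dilations defining $\psi_{k,b}$, and the only mildly delicate point is being careful that the dilation factors $2^{k/d}$ and $2^{(k-1)/d}$ produce exactly the Jacobian needed to eliminate all $k$-dependence.
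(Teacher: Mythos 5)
Your proposal is correct and follows essentially the same route as the paper's proof: a triangle-inequality split into the OGA error for $\|f-f_N\|_{L^2}$ plus $\|f_N-f_N^\dagger\|_{L^2}$, with the latter controlled by the uniform, $(k,b)$-independent bound $\|\psi_{k,b}-\psi_{k,b}^\dagger\|_{L^2}\leq (1+\tfrac{1}{\sqrt{2}})\|\sigma-\sigma^\dagger\|_{L^2}$ obtained via the same change of variables. The dilation/Jacobian bookkeeping giving the factors $1$ and $\tfrac{1}{\sqrt{2}}$ matches the paper exactly.
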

\begin{proof}
We first observe that 
    \begin{align*}
        \big\lVert \psi_{k,b} -  \psi_{k,b}^{\dagger} \big\rVert_{L^2} 
        =  &\ \Big\lVert 2^{k/2} \sigma(2^{k/d}(x-b)) - 2^{k/2-1} \sigma(2^{(k-1)/d}(x-b)) \\
          & \qquad- \left( 2^{k/2} \sigma^{\dagger}(2^{k/d}(x-b)) - 2^{k/2-1} \sigma^{\dagger}(2^{(k-1)/d}(x-b)) \right) \Big\rVert_{L^2} \\
         \leq&\  \Big\lVert 2^{k/2} \left(\sigma(2^{k/d}(x-b)) - \sigma^{\dagger}(2^{k/d}(x-b))\right)\Big\rVert_{L^2} \\
        & \qquad  + \Big\lVert 2^{k/2-1} \left( \sigma(2^{(k-1)/d}(x-b)) -  \sigma^{\dagger}(2^{(k-1)/d}(x-b)) \right)\Big\rVert_{L^2}.
        \end{align*}
By changing the variables in the integrals, we have
$$   \big\lVert \psi_{k,b} -  \psi_{k,b}^{\dagger} \big\rVert_{L^2}
        \leq \big\lVert \sigma -  \sigma^{\dagger} \big\rVert_{L^2} + \frac{1}{\sqrt{2}} \big\lVert \sigma - \sigma^{\dagger} \big\rVert_{L^2}.
$$
It then follows that
\begin{align*}
\big\|f-f_N^\dagger\big\|_{L^2} 
&\leq \big\|f-f_N\big\|_{L^2}+\big\|f_N-f_N^\dagger\big\|_{L^2}\\
&\leq\|f\|_{\mathcal{L}_1}(N+1)^{-1/2}+ 
 \bigg\lVert \sum_{k,b} \chi_{N} c_{k,b} \left(\psi_{k,b} - \psi^{\dagger}_{k,b}\right) \bigg\rVert_{L^2}.
\end{align*}
This proves the desired result. 
\end{proof}

By the same arguments of the proof of Theorem~\ref{corollary: smooth WNN approximation} in Section~\ref{subsec:main}, a function $f_N^\dagger$ given as in~\eqref{def:fN_dagger} can be expressed as the W$\vec{B}$-Net in (\ref{eqn: SW-VB Net}) with activation function $\sigma^\dagger$.
Indeed, it is easy to see that 
    \begin{align*}
     f^\dagger_N & =   \sum_{k,b} \chi_{N}(k,b) c_{k,b} \psi^{\dagger}_{k,b}\\
   & =  \sum_{k,b} \chi_{N}(k,b) c_{k,b} 2^{-k/2} \left( 2^k \sigma^{\dagger}(2^{k/d}(x-b)) - 2^{k-1} \sigma^{\dagger}(2^{(k-1)/d} (x-b))\right)\\
        & =  \sum_{k,b} \chi_{N}(k,b) c_{k,b} 2^{k/2} \sigma^{\dagger}(2^{k/d}(x-b))  -  \sum_{k,b} \chi_{N}(k,b) c_{k,b} 2^{k/2-1} \sigma^{\dagger}(2^{(k-1)/d} (x-b)).
    \end{align*}

Specifically, we have
\begin{equation*}
    f_N^\dagger(\vec{x}) =  \Psi_{\text{W}\vec{B}}\big[\boldp;\sigma^\dagger\big](\vec{x}),
\end{equation*}
where 
\begin{equation}
    \Psi_{\text{W}\vec{B}}\big[\boldp;\sigma^\dagger\big](\vec{x}) =  \sum_{n=1}^{2N} \alpha_{n} \sigma^{\dagger}(\gamma_{n} \vec{x} + \vec{\theta}_{n})
    \label{Psi:sigma_dagger}
\end{equation}
with a suitably chosen parameter set 
\begin{equation}\label{parameter:coro}
\boldp = \big[\gamma_1, \cdots, \gamma_{2N};\, \alpha_1, \cdots, \alpha_{2N};\, \vec{\theta}_1, \cdots, \vec{\theta}_{2N}\big]\in \RR^{2N}\times\RR^{2N}\times\RR^{d\times 2N}.
\end{equation}
Consequently, we obtain the following corollary directly from Theorem \ref{lem:disconti}. 

\begin{corollary}\label{cor:main:convergence}
Let $\sigma$, $\sigma^\dagger$ and $\mathcal{L}_1$ be as in Theorem~\ref{lem:disconti}. 
For every \(f\in \mathcal{L}_1\) and \(N\in\mathbb{N}\), there exists a parameter set $\boldp$ of the form \eqnref{parameter:coro} such that
  \begin{equation}
        \Big\lVert f-\Psi_{\text{W}\vec{B}}\big[\boldp;\sigma^\dagger\big] \Big\rVert_{L^2} 
       \leq \|f\|_{\mathcal{L}_1}(N+1)^{-1/2}+ \operatorname{dist}(\sigma,\sigma^\dagger)\sum_{(k,b)\in \Lambda}\chi_N(k,b) \left|c_{k,b}\right|,
       \label{eqn: network approximation with non-smooth ftn}
    \end{equation}  
where $\Psi_{\text{W}\vec{B}}\big[\boldp;\sigma^\dagger\big]$ is the W$\vec{B}$-Net with activation function $\sigma^\dagger$ given by \eqref{Psi:sigma_dagger}.
\end{corollary}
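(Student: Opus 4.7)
The plan is to assemble the conclusion by directly chaining the ingredients already prepared in the paper; no new estimate is needed. First, I would invoke Theorem~\ref{theorem: wavelet expansion} applied to the smooth activation $\sigma$: for the given $f\in\mathcal{L}_1$ and $N\in\mathbb{N}$, this produces coefficients $\{c_{k,b}\}_{(k,b)\in\Lambda}$ and an indicator $\chi_N$ supported on at most $N$ indices so that the greedy approximant $f_N=\sum_{(k,b)\in\Lambda}\chi_N(k,b)\,c_{k,b}\,\psi_{k,b}$ satisfies $\lVert f-f_N\rVert_{L^2}\leq \lVert f\rVert_{\mathcal{L}_1}(N+1)^{-1/2}$.

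Next, I would use exactly those same coefficients and same selector $\chi_N$ to build the companion object $f_N^\dagger$ as defined in \eqref{def:fN_dagger}, replacing each $\psi_{k,b}$ with $\psi_{k,b}^\dagger$. By construction this puts us in the hypotheses of Theorem~\ref{lem:disconti}, which immediately yields
\[
\big\lVert f-f_N^\dagger\big\rVert_{L^2} \leq \lVert f\rVert_{\mathcal{L}_1}(N+1)^{-1/2}+ \operatorname{dist}(\sigma,\sigma^\dagger)\sum_{(k,b)\in\Lambda}\chi_N(k,b)\,|c_{k,b}|.
\]

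Finally, I would rewrite $f_N^\dagger$ as a W$\vec{B}$-Net with activation $\sigma^\dagger$. Expanding $\psi_{k,b}^\dagger(\vec{x})=2^{k/2}\sigma^\dagger(2^{k/d}(\vec{x}-b))-2^{k/2-1}\sigma^\dagger(2^{(k-1)/d}(\vec{x}-b))$ inside \eqref{def:fN_dagger}, each of the at most $N$ active indices $(k,b)$ contributes two terms of the form $\alpha\,\sigma^\dagger(\gamma\,\vec{x}+\vec{\theta})$, giving $2N$ neurons overall. Reading off the resulting scalars and vectors produces a parameter set $\boldp=[\gamma_1,\dots,\gamma_{2N};\,\alpha_1,\dots,\alpha_{2N};\,\vec{\theta}_1,\dots,\vec{\theta}_{2N}]$ of the form \eqref{parameter:coro} such that $f_N^\dagger=\Psi_{\text{W}\vec{B}}[\boldp;\sigma^\dagger]$, which is precisely the identity displayed just above the corollary in \eqref{Psi:sigma_dagger}. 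Substituting this equality into the bound from Theorem~\ref{lem:disconti} gives \eqref{eqn: network approximation with non-smooth ftn}.

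Since Theorem~\ref{lem:disconti} already carries the full analytic content and the neural-network rewriting has just been verified in the paragraph preceding the corollary, I do not anticipate any genuine obstacle; the only care required is notational, namely checking that the same $\chi_N$ and the same $c_{k,b}$ are used in defining $f_N$, $f_N^\dagger$, and $\boldp$, so that the right-hand side of the bound matches the right-hand side of \eqref{eqn: network approximation with non-smooth ftn} term by term.
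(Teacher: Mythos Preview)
Your proposal is correct and mirrors the paper's own argument: the corollary is obtained directly from Theorem~\ref{lem:disconti} together with the rewriting of $f_N^\dagger$ as $\Psi_{\text{W}\vec{B}}[\boldp;\sigma^\dagger]$ displayed just before the statement. The only minor redundancy is that your explicit invocation of Theorem~\ref{theorem: wavelet expansion} is already absorbed into the hypotheses of Theorem~\ref{lem:disconti}, so steps~1 and~2 collapse into a single appeal to that theorem.
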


A natural approach to reducing the distance between \(\sigma\) and \(\sigma^\dagger\) is to express \(\sigma^\dagger\) as a linear combination of translations and scalar multiplications of a single function \(\sigma_0\) (see \eqref{eqn:smooth_activation_with_M}), and to increase the number of terms \(M\) in the linear combination, as in the typical method of approximating functions by step functions. It is noteworthy that the neural network defined with~\(\sigma^\dagger\) retains the structure of a W$\vec{B}$-Net, even as \(M\) increases (refer to \eqref{eqn: non-smooth NN}). Using this approach, we now present a practical way to control the approximation error while preserving a unified neural network structure.

\begin{theorem}\label{theorem:final}
Let $\sigma_0\in L^2(\RR^d)$ be arbitrary. Also, let $\sigma:\R^d \to \R$ be a twice-differentiable function satisfying the conditions in Proposition~\ref{thm:averaging}, and \(\mathcal{L}_1\) be its associated space, as in Theorem~\ref{lem:disconti}. For $\epsilon>0$, assume that
\begin{equation}
    \operatorname{dist}(\sigma,\sigma^\dagger)<\epsilon, \quad \text{where }\sigma^\dagger:=\sum_{m=1}^M c_m \sigma_0 (x-b_m)
\label{eqn:smooth_activation_with_M}
\end{equation}
with some fixed $M=M(\epsilon)\in\NN$, $b_m\in\RR^d$ and $c_m\in\RR$.
Then for $f\in\mathcal{L}_1$ and \(N\in\mathbb{N}\), \(f\) can be approximated by a neural network with the activation function $\sigma_0$ as 
\begin{equation}
    \big\lVert f-\Psi_N \big\rVert_{L^2} 
       \leq \|f\|_{\mathcal{L}_1}(N+1)^{-1/2}+ \epsilon\sum_{(k,b)\in \Lambda}\chi_N(k,b) \left|c_{k,b}\right|,
    \label{eqn: controlled network approximation with nonsmooth ftns}
\end{equation}
where
\begin{equation}
    \Psi_N(\vec{x}) = \sum_{n=1}^{2N\cdot M} \alpha_{n} \sigma_0 (\gamma_{n}\vec{x} + \vec{\theta}_n),
        \label{eqn: non-smooth NN}
\end{equation}
with some parameters $\alpha_n$, $\gamma_n$ and $\vec{\theta}_n$. Here, $\chi_N(k,b)$ is given as in \eqnref{f_N:expression}.
\end{theorem}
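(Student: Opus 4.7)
The plan is to reduce this to Corollary~\ref{cor:main:convergence} and then unfold the definition of $\sigma^\dagger$. Since $\sigma$ meets the hypotheses of Proposition~\ref{thm:averaging}, all the machinery of Sections~\ref{sec:wavelet:approx} and the proof of Theorem~\ref{lem:disconti} applies verbatim; the only genuinely new content of this theorem is structural, namely that writing $\sigma^\dagger$ as a finite sum of translates of $\sigma_0$ preserves the W$\vec{B}$-Net architecture at the cost of multiplying the node count by $M$.

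First, I would apply Corollary~\ref{cor:main:convergence} directly with the given pair $(\sigma,\sigma^\dagger)$. This provides, for any $f\in\mathcal{L}_1$ and $N\in\mathbb{N}$, a parameter set $\boldp=[\gamma_1,\dots,\gamma_{2N};\alpha_1,\dots,\alpha_{2N};\vec{\theta}_1,\dots,\vec{\theta}_{2N}]$ such that
\begin{equation*}
\bigl\lVert f-\Psi_{\text{W}\vec{B}}[\boldp;\sigma^\dagger]\bigr\rVert_{L^2}\leq \lVert f\rVert_{\mathcal{L}_1}(N+1)^{-1/2}+\operatorname{dist}(\sigma,\sigma^\dagger)\sum_{(k,b)\in\Lambda}\chi_N(k,b)\,|c_{k,b}|.
\end{equation*}
Using the standing assumption $\operatorname{dist}(\sigma,\sigma^\dagger)<\epsilon$ in \eqref{eqn:smooth_activation_with_M}, the right-hand side is already bounded by the right-hand side of \eqref{eqn: controlled network approximation with nonsmooth ftns}. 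Thus it only remains to exhibit $\Psi_{\text{W}\vec{B}}[\boldp;\sigma^\dagger]$ in the form \eqref{eqn: non-smooth NN}.

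Second, I would substitute the expansion $\sigma^\dagger(y)=\sum_{m=1}^M c_m\sigma_0(y-b_m)$ into the definition \eqref{Psi:sigma_dagger}:
\begin{equation*}
\Psi_{\text{W}\vec{B}}[\boldp;\sigma^\dagger](\vec{x})=\sum_{n=1}^{2N}\alpha_n\,\sigma^\dagger(\gamma_n\vec{x}+\vec{\theta}_n)=\sum_{n=1}^{2N}\sum_{m=1}^{M}\alpha_n c_m\,\sigma_0\bigl(\gamma_n\vec{x}+\vec{\theta}_n-b_m\bigr).
\end{equation*}
After relabeling the double index $(n,m)\in\{1,\dots,2N\}\times\{1,\dots,M\}$ by a single running index $\ell\in\{1,\dots,2NM\}$ and setting $\widetilde{\alpha}_\ell:=\alpha_n c_m$, $\widetilde{\gamma}_\ell:=\gamma_n$, and $\widetilde{\vec{\theta}}_\ell:=\vec{\theta}_n-b_m$, this is exactly a network of the form \eqref{eqn: non-smooth NN} with $2NM$ nodes and activation $\sigma_0$. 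Calling this network $\Psi_N$ completes the proof.

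There is no real obstacle here; the substantive analytic work was already done in Theorem~\ref{lem:disconti} (controlling $\lVert\psi_{k,b}-\psi_{k,b}^\dagger\rVert_{L^2}$ by $\operatorname{dist}(\sigma,\sigma^\dagger)$) and in the unpacking of the wavelet layer as two activation nodes carried out in the proof of Theorem~\ref{corollary: smooth WNN approximation}. The only point to be careful about is bookkeeping of the node count: each of the $2N$ wavelet terms produces $M$ copies of $\sigma_0$ after expanding $\sigma^\dagger$, hence the factor $2NM$ in \eqref{eqn: non-smooth NN} and not $2N$ or $NM$.
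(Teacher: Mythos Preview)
Your proposal is correct and follows exactly the same approach as the paper: invoke Corollary~\ref{cor:main:convergence} to obtain the error bound with $\operatorname{dist}(\sigma,\sigma^\dagger)$, substitute the expansion \eqref{eqn:smooth_activation_with_M} of $\sigma^\dagger$ into the network $\Psi_{\text{W}\vec{B}}[\boldp;\sigma^\dagger]$, and apply the hypothesis $\operatorname{dist}(\sigma,\sigma^\dagger)<\epsilon$. Your write-up is in fact more explicit than the paper's own proof in spelling out the relabeling that yields the $2NM$ nodes.
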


\begin{proof}
    Under the assumptions, we obtain 
    \begin{equation}
        \Big\lVert f-\Psi_{\text{W}\vec{B}}\big[\boldp;\sigma^\dagger\big] \Big\rVert_{L^2} 
       \leq \|f\|_{\mathcal{L}_1}(N+1)^{-1/2}+ \operatorname{dist}(\sigma,\sigma^\dagger)\sum_{(k,b)\in \Lambda}\chi_N(k,b) \left|c_{k,b}\right|,
       \label{eqn: last eqn}
    \end{equation}
    by invoking Corollary~\ref{cor:main:convergence}. Then, substituting the expression of \(\sigma^\dagger\), as given in~\eqref{eqn:smooth_activation_with_M}, into \eqref{Psi:sigma_dagger}, and applying the bound \(\epsilon\) on the distance in~\eqref{eqn: network approximation with non-smooth ftn}, we complete the proof.
\end{proof}

\begin{remark}
    From \eqref{eqn: last eqn} {\rm{(}}see also {\rm{\eqref{eqn: controlled network approximation with nonsmooth ftns})}}, decreasing the $L^2$-distance between $\sigma$ and $\sigma^\dagger$, governed by $\epsilon$, yields a tighter approximation bound for the W$\vec{B}$-Net \(\Psi_N\). To achieve the smaller~$L^2$-distance, one may increase~$M$, which in turn raises the number of network nodes, as indicated in \eqref{eqn: non-smooth NN}.
\end{remark}

\section{Conclusion}\label{sec:conclusion}

In this paper, we have developed a neural network approximation using a wavelet-based framework that is based on wavelet frame theory on spaces of homogeneous type. While previous wavelet-based approaches have often restricted the class of activation functions for specific research needs, our work extends their applicability by introducing sufficient conditions for a wider range of activation functions. In particular, these conditions accommodate various function classes, including those that are twice differentiable, potentially oscillatory, provided they exhibit suitable decay conditions. 

Nevertheless, the conditions still require the activation functions to meet a certain degree of smoothness due to the double Lipschitz condition in the wavelet frame theory on spaces of homogeneous type. To address this limitation and cover piecewise-smooth activation functions that are not necessarily twice differentiable, we propose to use the \(L^2\)-distance between smooth and non-smooth activation functions. We demonstrate that non-smooth activation functions that are close to smooth functions, with respect to the above distance, can still yield neural network approximations with a controllable error bound. In particular, this distance can be reduced by increasing the number of network nodes. Overall, we establish a theoretical foundation for ensuring convergence in wavelet-based neural network approximation across a broader and more practical class of activation functions.

\begin{appendices}

\section{Connection between $\vecW$B-Net and W$\vecB$-Net}
\label{appx:connection between neural networks}

We present the connection between $\vecW$B-Net and W$\vecB$-Net. We show, in particular, that $W\vecB$-Net is a special case of $\vecW B$-Net under the conditions specified below. Consider $\Psi_{\text{W}\vec{B}}$ in~\eqref{eqn: SW-VB Net}, which uses a vector-to-scalar activation $\sigma_{d\to1}$ of the form
\beq\label{eqn:cond:WB}
\sigma_{d\to1}(\vec{x})=\big(\sigma_{1\to1}\circ\vecone\big)(\vec{x}),\quad \vec{x}\in\RR^d,
\eeq
where
$\sigma_{1\to 1}$ is a scalar-to-scalar function and
 \(\vecone = (1,\cdots,1) \in \mathbb{R}^d\) acts on $\vec{x}\in\RR^d$ by $\vecone(\vec{x})=\vecone\cdot\vec{x}$. Then our $\Psi_{\text{W}\vec{B}}$ can be represented by the usual $\vec{W}B$-Net in \eqref{eqn: VW-SB Net} with the activation $\sigma_{\vec{W}B}=\sigma_{1\to 1}$.
 
 More precisely, let \(\boldp=[(\gamma_1, \cdots, \gamma_N);(\alpha_1, \cdots, \alpha_N);[\vec{\theta}_1\, \cdots\, \vec{\theta}_N]] \in \mathbb{R}^N \times \mathbb{R}^N \times \mathbb{R}^{d \times N}\) be a parameter set for the W$\vecB$-Net in \eqref{eqn: SW-VB Net} that satisfies \eqref{eqn:cond:WB}. Then we have
 \begin{equation*}
  \Psi_{\text{W}\vec{B}}\left[\boldp\right](\vec{x}) = \sum_{n=1}^N \alpha_n \big(\sigma_{\vec{W}B} \circ \vecone\big)(\gamma_n \vec{x} + \vec{\theta}_n) =\sum_{n=1}^N \alpha_n\,\sigma_{\vec{W}B} \big(\gamma_n\vecone\cdot \vec{x} + \vecone\cdot\vec{\theta}_n\big) = \Psi_{\vec{W}B}\left[\boldp'\right](\vec{x}),
\end{equation*}
where $\boldp'=[W;\vec{\alpha};\vec{\beta}]$, $W=[\gamma_1\vecone\,\cdots\,\gamma_N\vecone]$, $\vec{\alpha}=(\alpha_1,\cdots,\alpha_N)$, and $\vec{\beta}=(\vecone\cdot\vec{\theta}_1,\cdots,\vecone\cdot\vec{\theta}_N)$. In other words, from an expressiveness standpoint, any function represented by a W$\vecB$-Net that uses an activation of the form \eqref{eqn:cond:WB} naturally lies in the function space of the $\vecW$B-Net.  

\section{Architectural overview of related work on wavelet-based neural approximations}
    \label{Appendix}

We present two key results from the literature on wavelet-based neural approximations \cite{Frischauf:2024:QNN, Shaham:2018:PAP}.

The first uses the radial quadratic neural networks (RQNNs) \cite{Frischauf:2024:QNN}. 
The architecture of RQNNs shares a similar structure with ours, but it uses an activation function $\widetilde{\sigma}: \mathbb{R}\rightarrow\mathbb{R}$ in composition with a radial quadratic form from $\RR^d\to\RR$, resulting in a function $\sigma:\RR^d\to\RR$ given by
\[\sigma(\cdot)=\widetilde{\sigma}(r^2 - \lVert \cdot \rVert^2)\quad\mbox{for some fixed constant }r>0.\]
Then wavelet system \(\psi_{k,b}\) and \(S_k\) are defined by \eqref{def:Sk} and \eqref{def:psi_kb} with \(\sigma\), that is, 
    \begin{eqnarray*}
        S_k(x,b) & = &  2^k \sigma\left(2^{k/d}(x-b)\right)\\
        & = & 2^k \widetilde{\sigma}\left(r^2 - \lVert 2^{k/d}(x - b) \rVert^2 \right)\\
        \psi_{k,b}(x) & = & 2^{k/2}\sigma\left(2^{k/d}(x-b)\right) - 2^{k/2-1} \sigma \left(2^{(k-1)/d} (x-b) \right)\\
        & = & 2^{k/2}\widetilde{\sigma}\left(r^2 - \lVert 2^{k/d}(x-b) \rVert^2 \right) - 2^{k/2-1} \widetilde{\sigma} \left(r^2 - \lVert 2^{(k-1)/d} (x-b)\rVert^2 \right).
    \end{eqnarray*}

Figure~\ref{fig:RQNN} shows the RQNN architecture (for comparison, see Figure~\ref{fig:ourNN}). The primary difference between our networks and RQNNs lies in the layer immediately following the input; while we use an affine layer, RQNNs utilize a radial quadratic layer. This structural difference affects the domain of the activation function (from $\RR^d$ to $\RR$ in ours versus from $\RR$ to $\RR$ in RQNNs)

This distinction is visually represented by the orange boxes, which display the action of activation function \(\widetilde{\sigma}\). The gray box corresponds to \(\psi_{k,b}\). Consistent with our previous interpretation in Section~\ref{sec:wavelet:approx}, we regard the first radial quadratic layer and the activation function together as a single hidden layer. From this perspective, each \(\psi_{k,b}\) corresponds to two nodes in the hidden layer; hence, a sum of \(N\) wavelet terms \(\psi_{k,b}\) can be represented by a neural network with \(2N\) nodes in the hidden layer.

    \begin{figure}[h]
        \centering
        \includegraphics[width=0.6\linewidth]{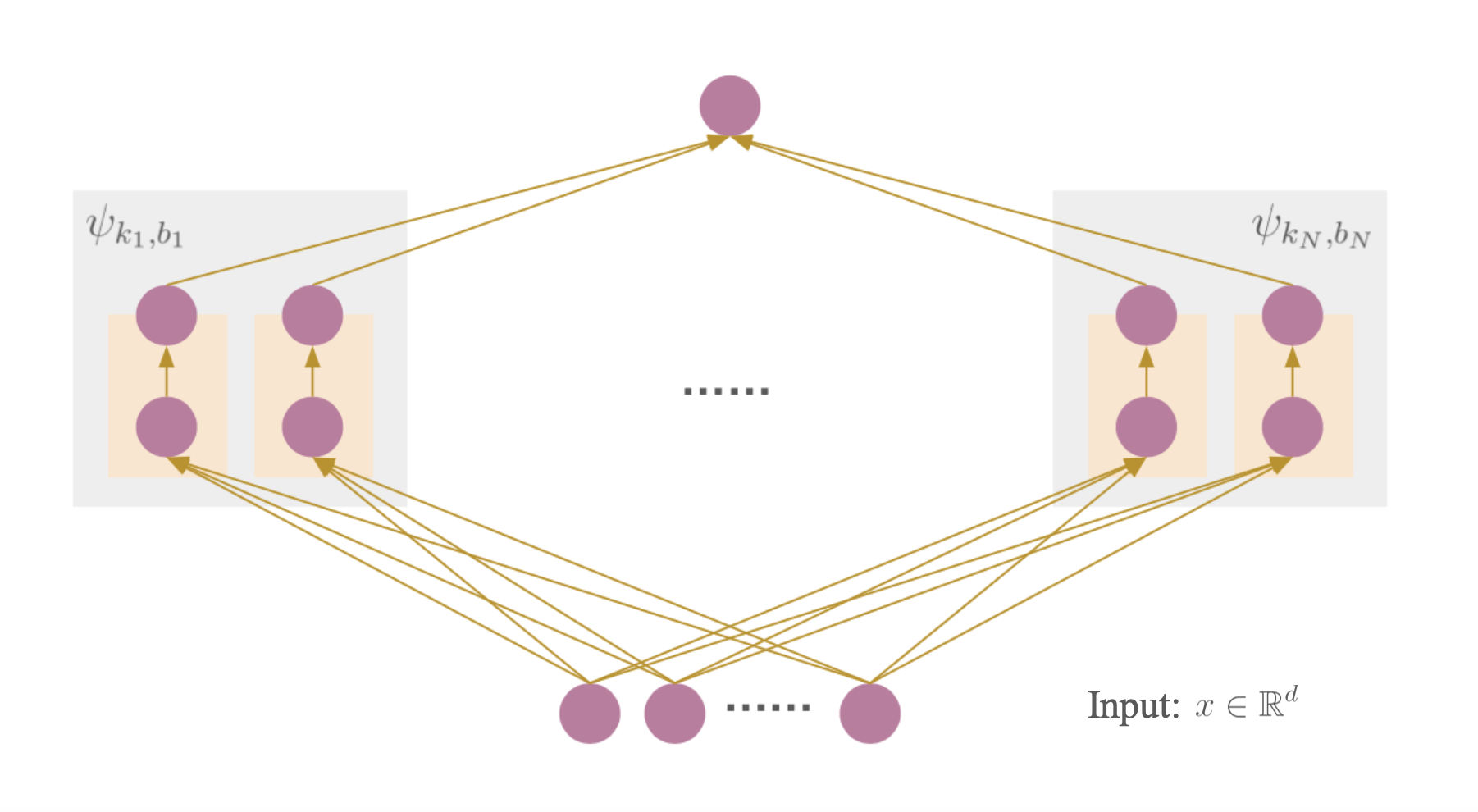}
        \caption{An architectural sketch of radial quadratic neural network in \cite{Frischauf:2024:QNN}.}
        \label{fig:RQNN}
    \end{figure}

The second example comes from \cite{Shaham:2018:PAP}, where the authors employ the ReLU activation function. 
The wavelet functions $\psi_{k,b}$ are defined by using the activation function $\sigma:\RR^d\to\RR$, given in terms of ReLU as follows:
\begin{equation*}
\sigma(x) = ReLU\left( \sum_{j=1}^d L(x_j) - 2(d-1)\right), \quad x\in\RR^d,
\end{equation*}
where \(L(x_j)= ReLU(x_j + 3) - ReLU(x_j +1) - ReLU(x_j -1) + ReLU(x_j -3)\). Consequently, \(\sigma\) can be realized by a network with 4d rectifier units in the first layer and a single unit in the second layer. For more details, including the architectural sketch, see \cite{Shaham:2018:PAP}.

Notably, the proposed network in this paper accommodates general activation functions, thereby allowing flexibility in the choice of activation functions. 

\end{appendices}


\end{document}